\documentclass{article} % For LaTeX2e
\usepackage{iclr2027_conference,times}

\usepackage{amsmath,amsfonts,bm}

\def\eqref#1{equation~\ref{#1}}
\def\1{\bm{1}}

\DeclareMathAlphabet{\mathsfit}{\encodingdefault}{\sfdefault}{m}{sl}
\SetMathAlphabet{\mathsfit}{bold}{\encodingdefault}{\sfdefault}{bx}{n}

\usepackage{hyperref}
\usepackage{url}
\usepackage{amsthm}
\usepackage{amssymb}
\usepackage[ruled,linesnumbered]{algorithm2e}
\usepackage{booktabs}
\usepackage{colortbl}
\usepackage{multirow}
\usepackage{graphicx}
\usepackage{wrapfig}
\usepackage{listings}
\usepackage{tcolorbox}
\tcbuselibrary{skins, breakable}
\usepackage{cuted}
\usepackage{caption}
\usepackage{array}
\usepackage{svg}

\title{Self-Evolving Visual Agent via Margin Reward}

\iclrfinalcopy

\author{Wenxi Yang \\
UCAS \\
\texttt{yangwenxi24@mails.ucas.ac.cn}
\And
Yuzhong Zhao \\
Microsoft Research Asia \\
\texttt{yuzhongzhao@microsoft.com}
\And
Baochang Zhang \\
Beihang University \\
\texttt{bczhang@buaa.edu.cn}
\And
Yunfan Liu \\
UCAS \\
\texttt{liuyunfan@ucas.ac.cn}
\And
Qixiang Ye \\
UCAS \\
\texttt{qxye@ucas.ac.cn}
}

\renewcommand{\eqref}[1]{Eq.~(\ref{#1})}

\newtheorem{proposition}{Proposition}

\begin{document}

\def\methodname{Marvo{}}

\maketitle

\begin{abstract}
Visual agents that interleave reasoning with tool invocation have posed a promising direction for multimodal tasks. 
However, learning effective tool invocation usually relies on curated question–answer pairs or expert demonstrations at scale, which are expensive to acquire. %annotate and limited in scalability. 
In this study, we propose \methodname{}, a simple-yet-effective self-evolution approach that enables learning tool invocation from unlabeled images. 
%images
\methodname{} consists of two agents: (1) a challenger that generates question-answer pairs from unlabeled images, and (2) a solver that learns tool invocation by solving questions through reinforcement learning.
%
% To encourage the solver to learn effective tool invocation, \methodname{} favors training questions those are difficult enough to warrant tool use, yet remain within the solver's capabilities.
To encourage effective tool invocation, a good training question should be difficult enough to warrant tool use, yet remain within the solver's capabilities.
% when equipped with tools.
% To promote effective tool invocation, training questions should be difficult enough to warrant tool use, yet remain within the solver's capabilities when equipped with tools.
%
To this end, we introduce a margin reward to the challenger, defined as solver's performance difference with and without access to tools.
With the margin reward, we create an adaptive curriculum that encourages the solver to learn effective tool invocation.
Across benchmarks covering agentic reasoning, visual search, mathematical, and general real-world reasoning, \methodname{} improves average accuracy by up to 7.1\% — demonstrating margin reward as a preferred way to scale question complexity without human supervision.
\begin{center}
Code would be released at \url{https://github.com/YWenxi/marvo}.
\end{center}
\end{abstract}

\section{Introduction}
Visual agents have extended the capabilities of multimodal models by interleaving reasoning with tool invocation \citep{o3, zhao2025pyvision}. Instead of producing answers solely from an initial visual observations, these agents have already invoked tools to acquire rich evidence, process visual information, and use the feedback to guide subsequent reasoning. 
%took a further step to 
This interaction offers a promising approach to multimodal tasks, but requires agents to learn effective tool invocation--they have to determine which tool is useful and how to incorporate its output into the reasoning process.

Existing approaches learning these behaviors usually rely on curated question--answer pairs through human supervision or expert demonstrations~\citep{zheng2026deepeyes, hong2026deepeyesv2, zhao2026pyvisionrl}. 
Such supervision provides task objectives or examples of successful tool use, but is expensive to be collected at scale.
%collecting it at scale is expensive. 
%
Expert demonstrations require constructing suitable interaction trajectories, while question--answer supervision requires identifying meaningful questions and establishing answers. 
These challenges motivate learning with zero data~\citep{huang2026r} or unlabeled data. Yet converting this potential into useful supervision requires a mechanism for producing valuable tasks and questions, particularly, for visual agents that largely depend on tool invocation.
% that help improving generalization capability of agents
%These challenges motivate learning from a more readily available resource: unlabeled images. An image can support multiple reasoning questions, making it a potential source of training tasks without additional human annotation. Yet converting this potential into useful supervision requires a mechanism for selecting questions that help an agent improve its tool invocation.

The value of a training question depends not only on its difficulty, but also on whether tools help the solver to correctly answer it. On the one hand, questions that the solver can answer without tools provide little incentive to acquire tool-use behaviors. On the other hand, questions that remain beyond the solver's capabilities even with tools offer limited opportunities to learn from successful interactions. 
Useful questions should therefore occupy a region between these extremes: they should pose a challenge beyond the solver's unaided reasoning while remaining solvable with tool invocation.
Moreover, this region depends on the solver's current capabilities and may shift as the solver improves. A productive training curriculum should track this changing relationship between question difficulty and tool utility.

In this study, we propose \methodname{}, a simple yet effective self-evolution approach that enables visual agents learning tool invocation from unlabeled images (Fig.~\ref{fig:figure1}a). 
\methodname{} consists of two agents with complementary roles. A challenger generates question--answer pairs from unlabeled images, and a solver learns to answer the generated questions through reinforcement learning with access to tools (Fig.~\ref{fig:figure1}b). The challenger supplies training tasks, while the solver's performance provides feedback about their learning values. This interaction allows the training curriculum to adapt to the solver, reducing dependence on a fixed collection of manually curated tasks.

To guide the challenger toward questions that promote tool invocation, we introduce a \emph{margin reward}, which measures solver's performance difference with and without access to tools. A larger margin, $i.e.$, performance difference, indicates that tool access enables the solver to answer a question more successfully than it could through unaided reasoning. Questions that are easily answered in both settings, or remain unsuccessful in both, receive little reward from this objective. Trained with the margin reward, the challenger is encouraged to generate questions that expose opportunities for useful tool invocation.

This reward connects question generation directly to the behavior we seek to improve. Optimizing question difficulty alone can favor tasks that the solver cannot solve, while optimizing tool-assisted success alone can favor tasks that do not require tools. The margin reward instead measures the solver's relative benefit from tool access. As this benefit is evaluated with respect to the current solver, the same objective can guide question generation as the solver's capabilities evolve. The resulting curriculum emphasizes questions for which tools make a practical difference, 
enabling visual agents to learn effective tool invocation without manually annotated question-answer pairs or expert demonstrations.

Across a diverse suite of benchmarks--11,691 questions spanning agentic reasoning~\citep{li2025tir}, visual search~\cite{hrbench, DBLP:conf/cvpr/WuX24a}, mathematical reasoning~\cite{DBLP:conf/eccv/ZhangJZLGQZLCQGL24, qiao2024we, DBLP:conf/nips/WangPSLRZZL24}, and general real-world reasoning--\methodname{} improves the average accuracy of Qwen2.5-VL-7B by 7.1\%. Together, these results establish the performance gap between tool-free and tool-augmented answering as a useful signal for generating training questions that improve tool-augmented reasoning.

% Our contributions are threefold:
% \begin{itemize}
%     \item We propose \methodname{}, a self-evolution approach in which a challenger and a solver turn unlabeled images into training opportunities for learning tool invocation.
%     \item We introduce a margin reward that guides question generation using the solver's performance difference with and without tools, encouraging an adaptive curriculum centered on tool utility.
%     \item We demonstrate accuracy improvements of up to 5.7\% on TIR-Bench and provide further evidence of the approach's applicability through experiments on mathematical reasoning.
% \end{itemize}

\begin{figure}[t]
\centering
\includegraphics[width=\linewidth]{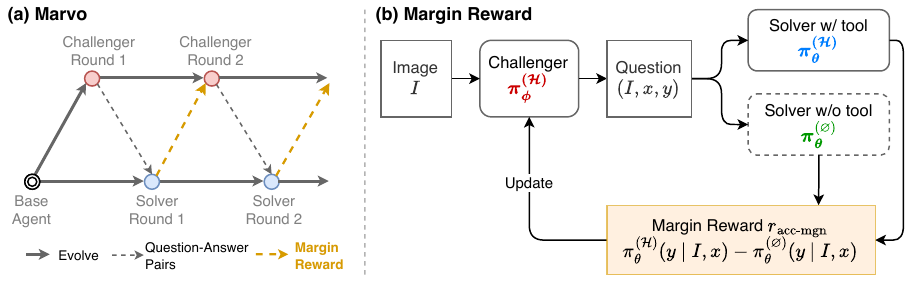}
\caption{\textbf{Flowchart of \methodname{}.}
(a) \methodname{} employs an evolution loop between a challenger and a solver, driven by the Margin Reward.
(b) The Margin Reward quantifies the solver's performance gap with and without tools, encouraging the challenger to propose questions that require tool use.
}
\label{fig:figure1}
\end{figure}

\section{Methodology}

\subsection{Preliminary}

% \begin{figure}[h]
% \centering
% \includegraphics[width=\linewidth]{figs/method4.pdf}
% \caption{
% (a) $r_{\text{uncertainty}}$ seeks for questions of medium difficulty for $\pi_\theta^{(\mathcal{H})}$, while $r_{\text{acc-mgn}}$ favors questions where tool access helps most.
% (b) The challenger proposes a question–answer pair $(x, y)$ from an unlabeled image $I$. The margin reward is used to update the challenger.
% }
% \label{fig:method}
% \end{figure}

\methodname{} is built upon the self-evolution framework of R-Zero~\citep{huang2026r}, which comprises two roles: a challenger $\pi_\phi$ and a solver $\pi_\theta$. Given an input image $I$, the challenger proposes question--answer pairs 
$\{(x, y)\}$, which serve as the training curriculum for the solver. 
To drive self-evolution, the challenger and the solver are optimized alternately with the following objectives, as
\begin{equation}
    \label{eq:objection-general-form}
    \begin{aligned}
        &\max_{\phi} \mathbb E_{I\sim\mathcal I, (x, y) \sim\pi_{\phi}(\cdot \mid I)} \left[ r\left(x,y, \pi_{\theta}\right)\right], \\
        &\max_{\theta} \mathbb E_{I\sim\mathcal I, (x, y) \sim\pi_{\phi}(\cdot \mid I),\hat y \sim\pi_\theta(\cdot \mid I, x)} \left[\1_{\{y =\hat y\}}\right],
    \end{aligned}
\end{equation}
where $r$ is the challenger's reward and $\mathcal I$ denotes the image pool. 

\begin{wrapfigure}{r}{0.4\textwidth}
\centering
\vspace{-0.5cm}
\includegraphics[width=0.36\textwidth]{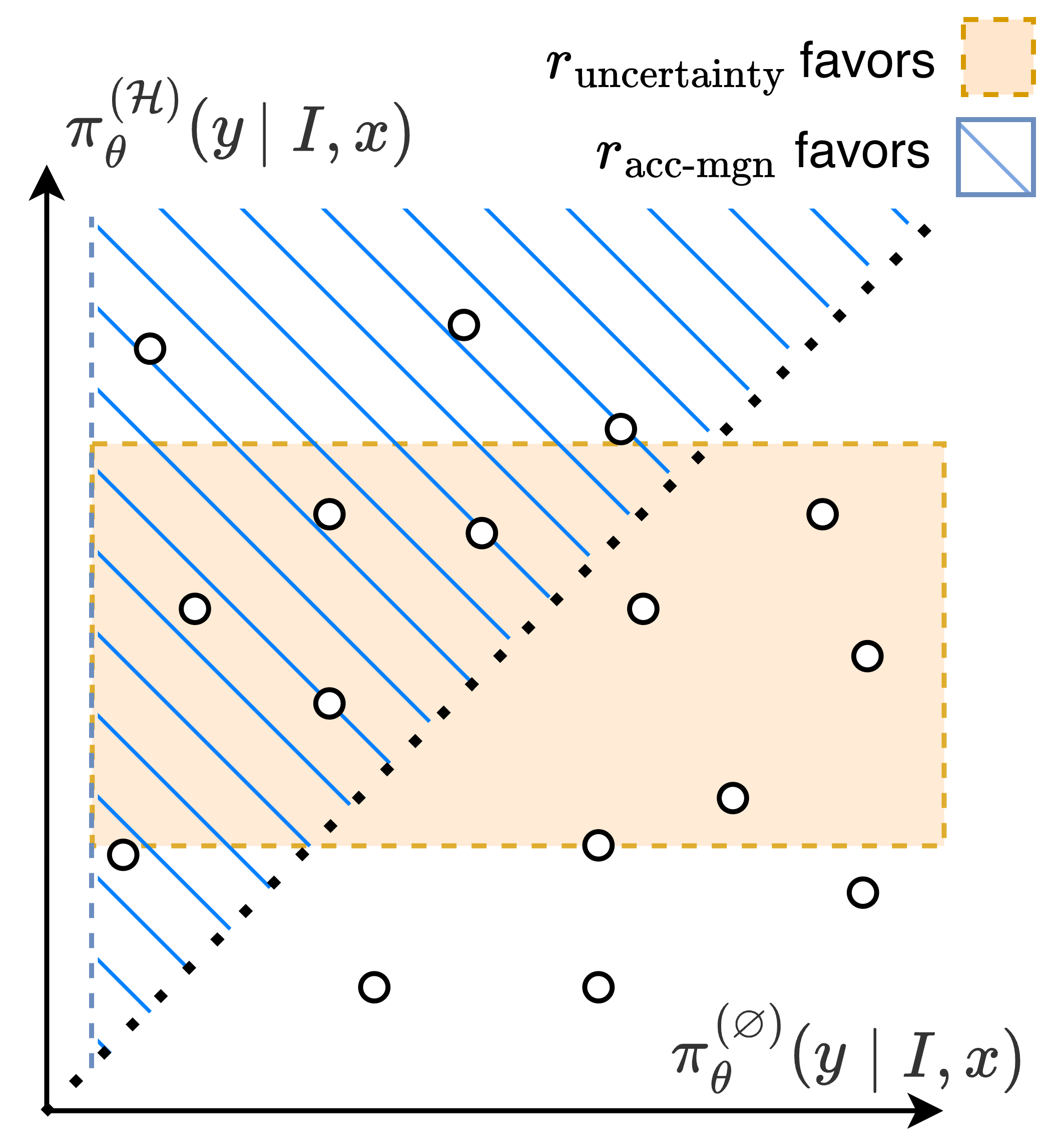}
\caption{\bf Question Distribution.}
\label{fig:method}
\vspace{-1.2cm}
\end{wrapfigure}

The R-Zero method designs $r$ as the uncertainty score, as
\begin{equation}
    r_{\text{uncertainty}}\left(x,y, \pi_{\theta}\right) = 1 - 2 \left| \max_{\tilde y} \big(\pi_\theta(\tilde y \mid I, x)\big) - \frac 12 \right|.
\end{equation}
Note that $r_{\text{uncertainty}}$ uses a pseudo label $\tilde y$ voted by the solver, instead of the challenger proposed label $y$. 
$r_{\text{uncertainty}}$ attains its maximum when $\max_{\tilde y}\pi_\theta(\tilde y \mid I, x) = 1/2$.
In other word, $r_{\text{uncertainty}}$ favors problems of intermediate difficulty for $\pi_\theta$, Fig.~\ref{fig:method}(a).
However, uncertainty alone is an indirect and often unreliable proxy for learning tool invocation. It captures only how confused the solver is, not whether solving a problem actually warrants tool use or would improve the solver's ability to invoke tools.

\subsection{Incentivizing with Margin Reward}
\label{sec:method-margin}

Building on \eqref{eq:objection-general-form}, we investigate how the challenger's reward should be designed so that it incentivizes effective tool invocation.

In specific, we consider instantiating a solver $\pi_\theta$ deployed with different harnesses, $\pi_\theta^{(\mathcal H)}$ and $\pi_\theta^{(\varnothing)}$, where $\mathcal H$ denotes the harness that grants tool access and $\varnothing$ denotes the empty harness, \textit{i.e.}, answering directly without any tools.
The goal is to train the target solver to master the tool-invocation capabilities that $\mathcal H$ affords beyond $\varnothing$. We therefore encourage the challenger $\pi_\phi$ to generate a problem $(x, y)$ such that it is solvable under $\mathcal H$ and not solvable under $\varnothing$. Namely,
\begin{equation}
\label{eq:support-gap}
    \textit{find $(x,y)$ such that } \quad 
    y \in \operatorname{supp}\!\Big(\pi_\theta^{(\mathcal H)}(\cdot \mid I, x)\Big) 
    \;\big\backslash\; 
    \operatorname{supp}\!\Big(\pi_\theta^{(\varnothing)}(\cdot \mid I, x)\Big),
\end{equation}
where $\operatorname{supp}(\cdot)$ denotes the support of a distribution.

As the strict support condition in \eqref{eq:support-gap} is rarely satisfied exactly, we relax it using the expected accuracy $\pi_{\theta}^{(\mathcal H)}(y \mid I, x)$. For a problem $(x, y)$, we design the challenger's reward as the margin in the solver's accuracy with and without access to tools, as
\begin{equation}
    \label{eq:accuracy-margin}
    r_{\text{acc-mgn}}(y, \pi_\theta) \;=\; 
    \pi_{\theta}^{(\mathcal H)}(y \mid I, x) - \pi_{\theta}^{(\varnothing)}(y \mid I, x).
\end{equation}
%precisely
The margin in \eqref{eq:accuracy-margin} is maximized accordingly when the target solver handles the problem reliably with tools while it fails without them (Fig.~\ref{fig:method}(a)), thereby steering the challenger toward questions that are difficult enough to warrant tool use, yet within the solver's capabilities when equipped with tools.

\paragraph{Connection with {\rm $r_{\text{uncertainty}}$}.}
If we rewrite the policy probability in \eqref{eq:accuracy-margin} using log-odds, \textit{i.e.} $\operatorname{logit}(p) = \log (p / (1-p))$, and perform a first-order Taylor expansion, then \eqref{eq:accuracy-margin} is interpreted as
\begin{equation}
\label{eq:margin-logodds}
r_{\text{acc-mgn}}(y, \pi_\theta) \;\approx\; \underbrace{\pi_{\theta}^{(\mathcal H)} \left( 1 - \pi_{\theta}^{(\mathcal H)} \right)}_{\text {uncertainty gate }} 
 \left( \operatorname{logit}\left(\pi_{\theta}^{(\mathcal H)}\right) -\operatorname{logit}\left(\pi_{\theta}^{(\varnothing)}\right) \right).
\end{equation}
Uncertainty is automatically gated by the Bernoulli variance $\pi_{\theta}^{(\mathcal H)}(y|I,x) \big( 1 - \pi_{\theta}^{(\mathcal H)}(y|I,x)\big)$ in \eqref{eq:margin-logodds}, which guarantees the proposed questions are effective incentives to train the solver.

\subsection{Equilibrium Analysis}
\label{sec:equilibrium}

We briefly characterize the equilibrium state induced by \eqref{eq:objection-general-form} and \eqref{eq:accuracy-margin}, deferring the full analysis to Appendix~\ref{app:nash}. 
At equilibrium, each proposed problem $(x,y)$ falls to one of two categories, as visualized in Fig.~\ref{fig:equilibrium}. 

In a \emph{gap-closed} equilibrium, $$\pi_{\theta}^{(\mathcal H)}(y|I,x) - \pi_{\theta}^{(\varnothing)}(y|I,x) \equiv 0.$$ 
\begin{wrapfigure}{r}{0.4\textwidth}
\centering
\includegraphics[width=0.35\textwidth]{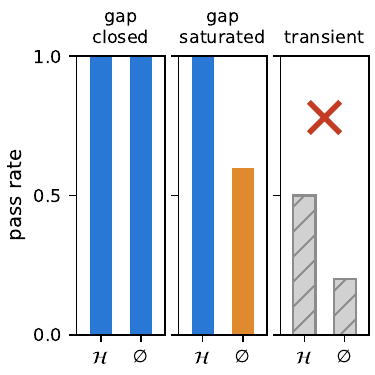}
\caption{\bf Equilibrium States.}
\label{fig:equilibrium}
\vspace{-0.3cm}
\end{wrapfigure}
The two harnesses are indistinguishable.
 In the solved case, where $\pi_{\theta}^{(\mathcal H)}(y|I,x) = \pi_{\theta}^{(\varnothing)}(y|I,x) = 1$, 
the tool-use capability that $\mathcal H$ affords has been learned by the model itself: under $\varnothing$, it can now answer directly without any external aid.

In a \emph{gap-saturated}
equilibrium, 
$$\pi_{\theta}^{(\mathcal H)}(y|I,x) = 1 > \pi_{\theta}^{(\varnothing)}(y|I,x).$$ 
The solver has mastered the proposed problems 
with tool access
while 
% $\varnothing$ fails 
failing them without tool access,
% despite sharing the same weights. 
which exposes an irreducible gap attributable solely to tool access.
Both players are simultaneously at their optima, but the solver's gradient vanishes, so learning stalls.
% until the problem frontier is expanded. 
%
Notably, equilibria never contain partially solved problems, where $0 < \pi_{\theta}^{(\mathcal H)}(y|I,x) < 1$. 
Such states are transient, and the uncertainty gate in \eqref{eq:margin-logodds} is active precisely along these transients, where the
solver still perceives learning signal.

\subsection{Training Paradigm}

Building upon $r_{\text{acc-mgn}}$, we introduce {\bf \methodname} (Self-E\underline{\bf vo}lution via \underline{\bf Mar}gin Reward), a training paradigm where a challenger and a solver co-evolve, as summarized in Alg.~\ref{alg:accuracy-margin}.
Both agents are instantiated from the same base model, yet play opposing roles: the challenger $\pi_\phi^{(\mathcal H)}$ learns to propose difficult questions, while the solver $\pi_\theta^{(\mathcal H)}$ learns to answer them. Notably, the challenger is also equipped with the harness $\mathcal H$, since posing questions that genuinely warrant tool use requires the challenger to experience the tools firsthand. The two agents are then trained \emph{alternately} through group-mean policy optimization (GRPO)\footnote{See details of GRPO implementation in Appendix~\ref{app:preliminaries}.}. 

\paragraph{Challenger Phase.}
Given an unlabeled image $I$, the challenger freely investigates the image with $\mathcal H$ and proposes question-answer pairs in multiple-choice format. Each proposed question is answered $n$ times each by $\pi_\theta^{(\mathcal H)}$ and $\pi_\theta^{(\varnothing)}$.
The resulting $r_{\text{acc-mgn}}$, combined with a format punishment $r_{\text{format}}$, is computed as the terminal reward for updating the challenger. Specifically, $r_{\text{format}} = 0$ if a valid question is extracted, and $-1$ otherwise.

\paragraph{Dataset Curation.}
We apply the challenger to a pool of images to propose candidate questions. Each question is answered $n$ times by the target solver $\pi_\theta^{(\mathcal H)}$, and questions with a pass rate of $0$ or $1$ are discarded, as such questions are either too hard or too easy for the current solver and their uniform solver rewards produce zero advantage in GRPO.

\paragraph{Solver Phase.}
The target solver $\pi_{\theta}^{(\mathcal H)}$ is  then updated with GRPO on the curated questions,  using the rule-based reward $\1_{\{\hat y = y\}}$.
We only train $\pi_\theta^{\left(\mathcal H\right)}$, as mixing trajectories from $\mathcal H$ and $\varnothing$ in the same batch may lead to conflicting gradient updates.

\begin{algorithm}[ht]
\caption{\methodname{}: Self-Evolution via Margin Reward}
\label{alg:accuracy-margin}
\KwIn{image pool $\mathcal{I}$; base model $\mathcal{M}$; rounds $R$; images per round $B$; challenger steps $m$, solver steps $n$; group size $G$; target harness $\mathcal H$}
\KwOut{trained solver $\pi_{\theta}^{(\mathcal{H})}$, trained challenger $\pi_{\phi}^{(\mathcal{H})}$}
Initialize $\pi_{\phi}$ and $\pi_{\theta}$ from the same base model $\mathcal M$\;
\For{$r \leftarrow 1$ \KwTo $R$}{
  $\mathcal{B}_r \leftarrow$ next $B$ images of the shuffled pool $\mathcal{I}$\;
  \tcp{Challenger Phase}
  \For{$m$ GRPO steps}{
    Sample images $I \sim \mathcal{B}_r$\;
    Sample $G$ question--answer pairs $\{(x_i, y_i)\}_{i=1}^G$ per image with $\pi_{\phi}^{(\mathcal{H})}$\;
    For each proposed question, compute $\pi_{\theta}^{(\mathcal{H})}(y_i \mid I, x_i)$ and $\pi_{\theta}^{(\varnothing)}(y_i \mid I, x_i)$ \;
    $r_i \leftarrow r_{\text{acc-mgn}}(y_i, \pi_{\theta}) + r_{\text{format}}$\;
    Update $\phi$ by GRPO\;
  }
  
  \tcp{Dataset Curation}
  $\mathcal{D}_r \leftarrow$ problems $(x, y)$ authored by $\pi_{\phi}^{(\mathcal{H})}$ on $\mathcal{B}_r$\;
  $\mathcal{D}_r \leftarrow$ filter out malformed and trivial questions in $\mathcal{D}_r$\;
  
  \tcp{Solver Phase}
  \For{$n$ GRPO steps}{
    Sample $(x, y) \sim \mathcal{D}_r$ (one pass; no epochs)\;
    Sample $G$ episodes per problem $\hat{y}_i \sim \pi_\theta^{(\mathcal{H})}(\cdot \mid I, x)$\;
    $r_i \leftarrow \1_{\{y = \hat{y}_i\}}$\;
    Update $\phi$ by GRPO\;
  }
  
}
\end{algorithm}

\section{Experiment}
\label{sec:experiment}

\subsection{Setting}

\paragraph{Models \& Datasets.}
PyVision~\citep{zhao2025pyvision} is adopted as the target harness $\mathcal H$ to incentivize, which is an interactive, multi-turn framework that enables VLMs to invoke Python-based tools. 
Qwen2.5-VL-7B-Instruct~\citep{Qwen2.5-VL} and Qwen3-VL-4B~\citep{Qwen3-VL} are adopted as base models. Training are initialized from the base models' Supervised Fine-Tuning~(SFT) checkpoints\footnote{See details of SFT checkpoints in Appendix~\ref{app:sft-training-details}.}.
We sample images from the PyVision-RL training set~\citep{zhao2026pyvisionrl} and discard those with excessively large resolutions, yielding a pool of 42{,}124 images. The pool spans high-resolution natural images, mathematical diagrams, and scientific charts. 

\paragraph{Evaluation.}
To demonstrate \methodname{}'s efficacy, the trained models are evaluated on a diverse set of benchmarks. 
\textit{Agentic reasoning} is measured on TIR-Bench~\citep{li2025tir}, which requires actively manipulating visual inputs with tools rather than relying on static observation. 
\textit{Visual search} is assessed on V*~\citep{DBLP:conf/cvpr/WuX24a} and HRBench-4K~\citep{hrbench}, which test the grounding of fine-grained details in high-resolution and information-dense images. 
\textit{Multimodal mathematical reasoning} is evaluated on MathVerse~\citep{DBLP:conf/eccv/ZhangJZLGQZLCQGL24}, MathVision~\citep{DBLP:conf/nips/WangPSLRZZL24}, and WeMath~\citep{qiao2024we}, where diagrams, charts, and plots must be interpreted to solve math problems. 
\textit{real-world understanding} of everyday scenes is tested on RealWorldQA~\citep{RealWorldQA2024}.

\subsection{Main Results}
\label{sec:main-results}

\newcommand{\up}[1]{\textcolor{green!60!black}{\scriptsize(+#1)}}
\newcommand{\down}[1]{\textcolor{red!80!black}{\scriptsize(--#1)}}
\newcommand{\tied}[1]{\textcolor{gray}{\scriptsize(#1)}}
\newcommand{\best}[1]{\textbf{#1}}
\begin{table*}[t]
\centering\small
\setlength{\tabcolsep}{3pt}
\caption{\textbf{Main results.} For each self-evolution process (one model family with one method), the best value in each column is in \textbf{bold}. 
Parenthesized values are the change versus the previous round. 
% \textcolor{green!60!black}{green} for gains and \textcolor{red!80!black}{red} for drops. 
Overall is the mean over the seven benchmarks.
}
\label{tab:main-results}
\begin{tabular}{@{}>{\footnotesize}llllllll|l@{}}
\toprule
& \multicolumn{1}{c}{Agentic} & \multicolumn{2}{c}{Visual Search} & \multicolumn{3}{c}{Mathematical Reasoning} & \multicolumn{1}{c}{General} & \\
\cmidrule(lr){2-2}\cmidrule(lr){3-4}\cmidrule(lr){5-7}\cmidrule(lr){8-8}
& TIR-Bench & V* & HR4K & MathVerse & MathVision & WeMath & RWQA & Overall \\
\midrule
Gemini-2.5-Pro & 28.9 & 83.8 & 87.3 & 82.9 & 63.8 & 78.0 & 78.0 & 71.6 \\
o3 & 26.9 & 95.7 & - & - & 73.3 & 80.6 & - & - \\
Qwen2.5-7B & 16.0 & 78.5 & 71.6 & 33.8 & 25.6 & 34.6 & 57.5 & 34.8 \\
Qwen3-4B & - & 74.9 & 73.5 & 42.3 & 44.6 & - & 70.9 & - \\
DeepEyesV2 & - & 81.8 & 77.9 & 52.7 & 28.9 & 38.1 & - & - \\
Agent0-VL & - & - & - & 53.1 & 37.3 & 71.7 & - & - \\
PyVision-RL & 19.8 & 88.7 & 78.1 & 55.8 & 28.7 & 47.7 & 66.3 & 46.6 \\
\midrule
\multicolumn{9}{c}{\emph{Self-Evolution with Qwen2.5-VL-7B}} \\
\midrule
\multicolumn{9}{l}{\emph{R-Zero}} \\
SFT & 17.1 & 72.7 & 67.0 & 41.5 & 25.1 & 31.9 & 57.5 & 36.6 \\
Round 1 & 17.4\up{0.3} & 73.3\up{0.6} & 67.5\up{0.5} & 43.1\up{1.6} & \best{25.6}\up{0.5} & 47.7\up{15.8} & 55.8\down{1.7} & 39.6\up{3.0} \\
Round 2 & 18.5\up{1.1} & 74.9\up{1.6} & 67.5\tied{+0.0} & 43.1\tied{+0.0} & 23.7\down{1.9} & 51.1\up{3.4} & 57.9\up{2.1} & 39.8\up{0.2} \\
Round 3 & \best{20.7}\up{2.2} & \best{76.4}\up{1.5} & \best{70.1}\up{2.6} & \best{44.2}\up{1.1} & \best{25.6}\up{1.9} & \best{56.5}\up{5.4} & \best{61.4}\up{3.5} & \best{42.2}\up{2.4} \\
\midrule
\multicolumn{9}{l}{\emph{\methodname{} (Ours)}} \\
SFT & 17.1 & 72.7 & 67.0 & 41.5 & 25.1 & 31.9 & 57.5 & 36.6 \\
Round 1 & 20.2\up{3.1} & 73.8\up{1.1} & 67.3\up{0.3} & 43.2\up{1.7} & 25.2\up{0.1} & 48.7\up{16.8} & 58.0\up{0.5} & 40.1\up{3.5} \\
Round 2 & 20.6\up{0.4} & 75.4\up{1.6} & 68.6\up{1.3} & 43.6\up{0.4} & 25.6\up{0.4} & 53.4\up{4.7} & \best{65.1}\up{7.1} & 41.6\up{1.5} \\
Round 3 & \best{21.7}\up{1.1} & \best{78.5}\up{3.1} & \best{73.8}\up{5.2} & \best{44.2}\up{0.6} & \best{27.8}\up{2.2} & \best{58.6}\up{5.2} & 64.6\down{0.5} & \best{43.7}\up{2.1} \\
\midrule
\multicolumn{9}{c}{\emph{Self-Evolution with Qwen3-VL-4B}} \\
\midrule
\multicolumn{9}{l}{\emph{R-Zero}} \\
SFT & \best{22.3} & 78.0 & 71.0 & 49.7 & 38.8 & 57.4 & 62.2 & 47.9 \\
Round 1 & 22.2\down{0.1} & \best{83.3}\up{5.3} & 75.3\up{4.3} & 52.9\up{3.2} & 40.0\up{1.2} & 59.5\up{2.1} & \best{68.1}\up{5.9} & 50.4\up{2.5} \\
Round 2 & 21.6\down{0.6} & 78.5\down{4.8} & \best{77.0}\up{1.7} & \best{53.6}\up{0.7} & 39.7\down{0.3} & 64.3\up{4.8} & 62.6\down{5.5} & 50.8\up{0.4} \\
Round 3 & 22.1\up{0.5} & 79.1\up{0.6} & 76.8\down{0.2} & 52.7\down{0.9} & \best{40.3}\up{0.6} & \best{67.4}\up{3.1} & 66.3\up{3.7} & \best{51.4}\up{0.6} \\
\midrule
\multicolumn{9}{l}{\emph{\methodname{} (Ours)}} \\
SFT & 22.3 & 78.0 & 71.0 & 49.7 & 38.8 & 57.4 & 62.2 & 47.9 \\
Round 1 & 22.9\up{0.6} & 81.7\up{3.7} & 75.5\up{4.5} & 52.9\up{3.2} & 38.1\down{0.7} & 59.0\up{1.6} & \best{65.1}\up{2.9} & 49.7\up{1.8} \\
Round 2 & 22.9\tied{+0.0} & 83.3\up{1.6} & 77.3\up{1.8} & 53.5\up{0.6} & \best{39.7}\up{1.6} & 65.5\up{6.5} & 64.6\down{0.5} & 51.3\up{1.6} \\
Round 3 & \best{23.8}\up{0.9} & \best{83.8}\up{0.5} & \best{77.9}\up{0.6} & \best{53.7}\up{0.2} & 39.6\down{0.1} & \best{68.8}\up{3.3} & 64.2\down{0.4} & \best{52.0}\up{0.7} \\
\bottomrule
\end{tabular}
\end{table*}

In Table~\ref{tab:main-results}, \methodname{} is compared against R-Zero across two model families, alongside evaluation results of frontier models~\citep{comanici2025gemini, o3} and open-source tool-augmented visual agents~\citep{liu2025agent0, hong2026deepeyesv2, zhao2026pyvisionrl} for reference.

\paragraph{\methodname{} prevails at matched iteration.}
Across both model families, \methodname{} consistently outperforms R-Zero at matched iterations, with the largest gains appearing exactly where agency matters most: +3.7 on HR4K and +2.1 on V* for the 7B model at Round 3, and +4.7 on V* for the 4B model. The cumulative picture is even clearer. The advantage is most pronounced on agentic reasoning: \methodname{} delivers gains on both Qwen2.5-VL-7B and Qwen3-VL-4B, whereas R-Zero with Qwen3-VL-4B finishes below its starting point on TIR-Bench. Surprisingly, our evolved 7B model surpasses PyVision-RL on TIR-Bench, despite both being trained from the same SFT checkpoint.

\paragraph{\methodname{} shows better stability across rounds.}
The per-round deltas in Tab.~\ref{tab:main-results} expose a qualitative difference between the two evolution loops. 
R-Zero's improvements are often non-monotic: its 4B run regresses on four of seven benchmarks in Round 2 alone and its 7B run makes no progress on three benchmarks in Round 2. 
Aggregating both model families, 9 of R-Zero's 42 per-round deltas are negative, compared with 5 of 42 under \methodname{}, and the largest single-round regression is 5.5 points under R-Zero versus 0.7 under \methodname{}. 
Notably, R-Zero's regressions concentrate on the tool-dependent benchmarks (V*, RealworldQA, TIR-Bench),
suggesting that its self-generated data distribution shifts substantially between rounds; the near-monotone improvement under \methodname{} indicates a more stable co-evolution curriculum.

\paragraph{Tool invocation is where the gap opens up.}
The sharpest contrast appears on benchmarks that cannot be solved without calling tools. On TIR-Bench, three rounds of R-Zero leave the 4B model slightly \emph{worse} than its SFT start, while \methodname{} improves it by +1.5 and the 7B model by +5.7; TIR-Bench is also where our very first iteration jumps the most. Visual search tells the same story, with cumulative 7B gains of +5.8 on V* and +6.8 on HR4K. 

\subsection{Training Dynamics}

\begin{figure}[t]
\centering
\includegraphics[width=\linewidth]{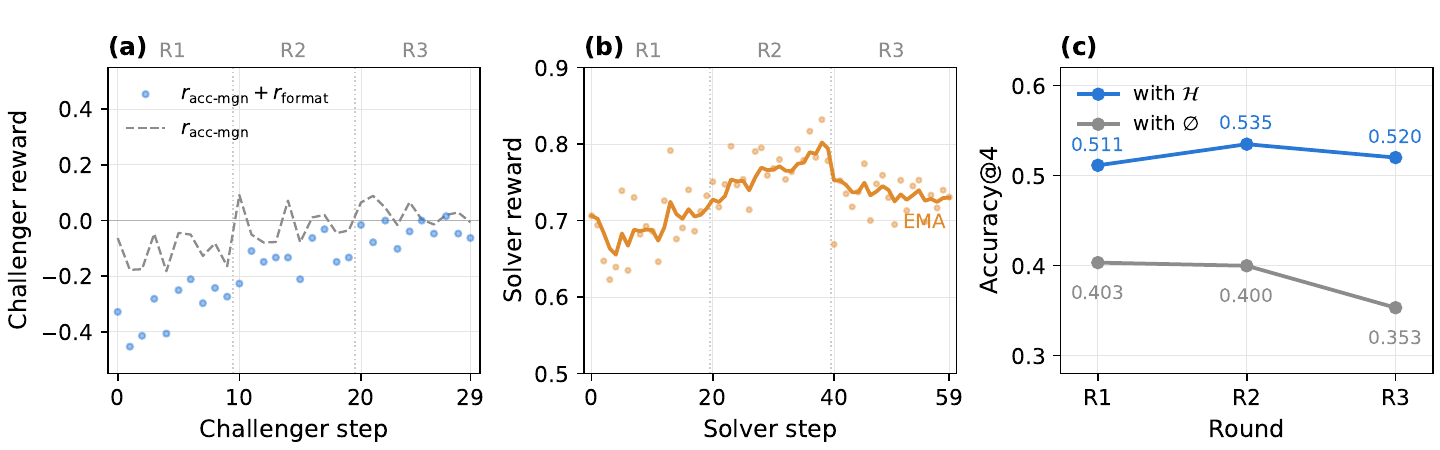}
\caption{
\textbf{Training Dynamics.}
\textbf{(a-b)}~Reward progress in challenger and solver training phase.
\textbf{(c)}~Per-round difficulty of challenger-generated datasets, as measured by model at each round.
}
\label{fig:training-dynamics}
\end{figure}

Figure~\ref{fig:training-dynamics} follows the 4B \methodname{} run through its three evolution rounds, tracking the challenger reward, the solver reward, and a per-round probe of $\pi_{\theta}^{(\mathcal H)}$ versus $\pi_{\theta}^{(\varnothing)}$ accuracy on each round's own questions. The challenger learns on two timescales.
Fig.~\ref{fig:training-dynamics}a shows that in Round~1 the challenger reward climbs steadily while $r_{\text{acc-mgn}}$ stays flat below zero. 
In fact, the fraction of responses from which a question--answer pair could not be extracted 
falls from $0.30$ to $0.17$ over the run, so nearly all early progress is the model learning to emit well-formed, extractable question--answer pairs. Only once the format is in place, does the margin itself move.
Within each round the solver's reward rises as it masters the current question set~(Fig.~\ref{fig:training-dynamics}b).
At the beginning of each round, the reward levels are reset, 
as the updated challenger proposes questions the current solver cannot yet answer well. Across all three rounds, the proposed problems stay within the sweet spot of the uncertainty gate in \eqref{eq:margin-logodds}, where the solver keeps receiving a strong training signal~(Fig.~\ref{fig:training-dynamics}c). When the solver is instead probed without tool access, however, pass rates drop, indicating that the curated problems genuinely require tool invocation.
  
\section{Analysis}

\begin{figure}[t]
\centering
\includegraphics[width=\textwidth]{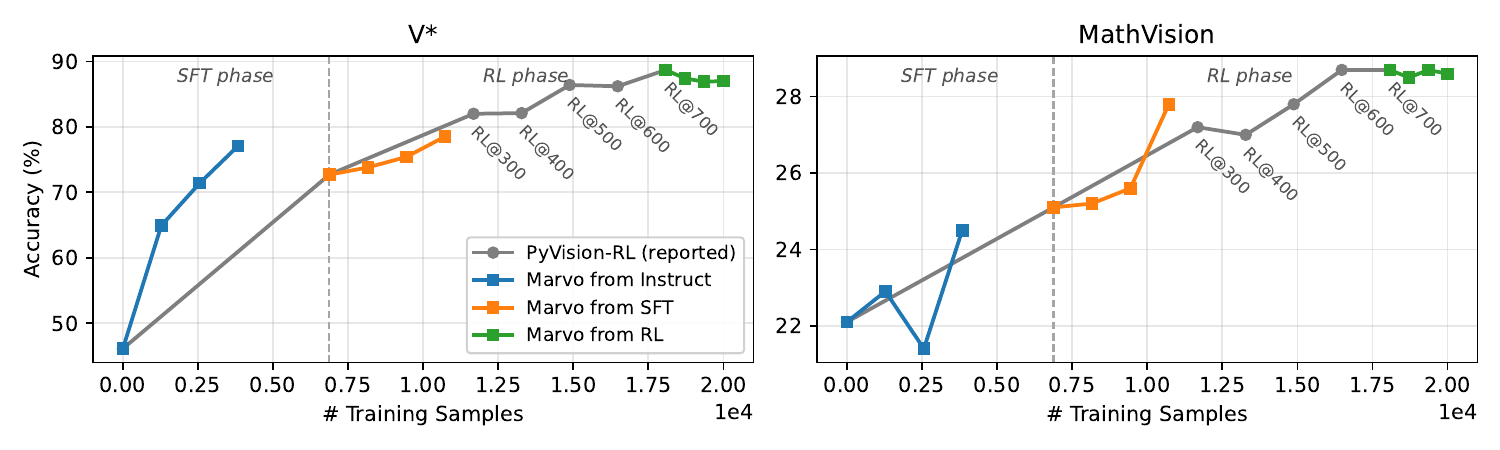}
\caption{
\textbf{\methodname{} across post-training stages.}
The gray curve is the reported PyVision-RL training progress.
% % The dashed line marks the SFT to RL phase boundary. 
Colored curves are \methodname{} experiments launched from different checkpoints. 
}
\label{fig:synergy-curve}
\end{figure}

\begin{figure}[t]
\centering
\includegraphics[width=\linewidth]{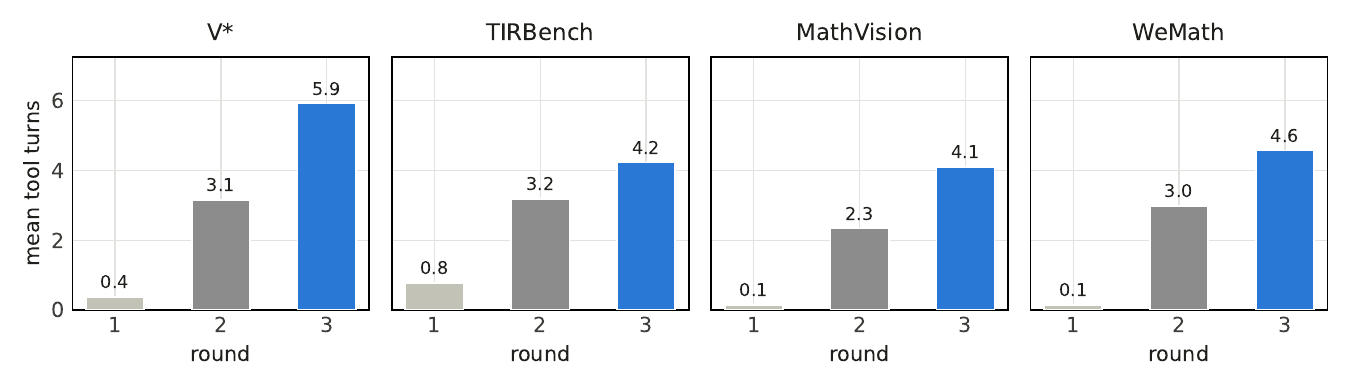}
\caption{
\textbf{Mean number of tool turns during evaluation.} 
Initializing from Instruct model, 
\methodname{} incentivizes tool invocation without human-curated data supervision. 
}
\label{fig:mean-turn}
\end{figure}

%resulting
\paragraph{Self-Evolution Across Post-Training Stages.} 
To understand \emph{when} \methodname{} helps in post-training, we plug it at three stages of the post-training pipeline in PyVision-RL~\citep{zhao2026pyvisionrl} (the Instruct model, the SFT cold-start checkpoint, and the final RL model) and align the evolution curves to the reported training progress
(Fig.~\ref{fig:synergy-curve}). 

At early stages, \methodname{}-generated curriculum incentivizes the visual agent to learn tool invocation much as human-annotated data does, even though it uses no human-annotated question sets or demonstrations at all. 
Note that the average number of tool turns per evaluation question rises across different tasks~(Fig.~\ref{fig:mean-turn}), so the agent's tool invocation ability
does not degrade under self-evolution.  

However, improvement is little when \methodname{} is introduced to the post-trained model. 
We conjecture that this is an entropy effect. 
PyVision-RL reports that policy entropy falls from 0.75 to about 0.4 over the course of RL~\citep{zhao2026pyvisionrl}.
Entropy is a common proxy for a policy's willingness to explore. 
Once entropy has collapsed, there is little exploration headroom left for self-generated data to amplify, and \methodname{}'s influence on the agent fades.
\paragraph{\methodname{} with Another Harness.}
% \paragraph{Tool Configuration Robustness.}
We further evaluate \methodname{} and R-Zero under an alternative tool configuration. 
The VLM is additionally equipped with an image-zoom-in tool, as shown in Appendix.~\ref{app:visual-agents}, which takes a bounding box coordinate as input and returns the cropped image region. Since this tool requires no additional supervision, we train the visual agent directly from the Qwen3-VL-4B checkpoint, bypassing the cold-start stage. As shown in Tab.~\ref{tab:harness-zoomin}, \methodname{} improves steadily across self-evolution rounds, rising from 84.29 to 91.10 and largely preserving the consistent growth observed in the PyVision setup. In contrast, R-Zero gains early momentum but oscillates after Round~1. The gap between the two methods widens, suggesting that \methodname{} transfers robustly across harnesses, maintaining its advantage even when the action space changes from dynamic Python execution to the image-zoom-in tool.

\paragraph{Challenger Performance on Visual Reasoning Tasks.}
Despite the promising performance of Marvo, a natural question arises: since the challenger $\pi_{\phi}^{(\mathcal{H})}$ shares the same harness $\mathcal{H}$ as the solver and proposes the question and answer, why do we need to train the solver at all? We evaluate the challenger's per-round answering accuracy on the benchmarks (Table~\ref{tab:challenger-bench}) and find that it is consistently worse than the trained solver's and essentially flat across rounds. The challenger's performance oscillates around on TIR-Bench and improving only marginally on V*. 

The challenger's training signal comes from its role as a task proposer, so its capacity is allocated to synthesizing diverse, verifiable, and appropriately difficult questions rather than to solving itself. Because a proposed question is sampled from the same trajectory as its answer, its phrasing tends to encode the solution path---an exposure bias that inflates self-reported competence, whereas benchmark questions are exogenous and neutral.

\begin{table}[t]
\centering\small
\begin{minipage}{0.6\textwidth}
\centering
\captionof{table}{\textbf{Performance evolution} on V* with Zoom-In tool.}
\label{tab:harness-zoomin}
\begin{tabular}{@{}lllllll@{}}
\toprule
Round                          & Qwen3-VL-4B & 1    & 2    & 3    & 4    & 5    \\ \midrule
\methodname & 84.3        & 84.8 & 85.3 & 89.5 & 89.0 & \textbf{91.1} \\
R-Zero                         & 84.3        & 86.9 & 85.9 & \textbf{89.0} & 88.5 & 86.9 \\ \bottomrule
\end{tabular}
\end{minipage}%
\hfill
\begin{minipage}{0.38\textwidth}
\centering\small
\captionof{table}{
% \color{red}
\bf Challenger performance.
}
\label{tab:challenger-bench}
\begin{tabular}{@{}ccc@{}}
\toprule
\methodname{} Round      & TIR-Bench & V*   \\ \midrule
1 & 19.8      & 71.7 \\
2 & 19.5      & 72.3 \\
3 & 19.8      & 73.8 \\ \bottomrule
\end{tabular}
\end{minipage}
\end{table}

\paragraph{Extension of Margin Reward.} %tbd
The accuracy-margin reward can be generalized to compare the target agent against a \emph{weaker reference agent}, rather than against the model's own baseline:
\begin{equation}
\label{eq:extend-acc-mgn}
\tilde r_{\text{acc-mgn}}(y, \pi_\theta; \mathcal H_1, \mathcal H_2) \;=\;
    \pi_{\theta}^{(\mathcal H_1)}(y \mid I, x) - \pi_{\theta}^{(\mathcal H_2)}(y \mid I, x),
\end{equation}
where $\pi_{\theta}^{(\mathcal H_1)}$ is the target agent we seek to optimize and $\pi_{\theta}^{(\mathcal H_2)}$ is a deliberately weaker visual agent operating under a different harness $\mathcal H_2$. Since \eqref{eq:extend-acc-mgn} has the same margin structure as $r_{\text{acc-mgn}}$, the equilibrium analysis in Sec.~\ref{sec:method-margin} remains.

A practical challenge, however, is constructing an agent that is \emph{reliably weaker} than $\mathcal H_1$: naively simplifying the tool configuration does not necessarily degrade performance, since tool diversity may hurt as often as it helps. We therefore restrict the capability of $\mathcal H_1$ by reducing its maximum conversation turns while keeping the tool configuration fixed, which yields a strictly less expressive agent without altering its action space. Specifically, we instantiate $\mathcal H_2$ as a truncated variant of PyVision ($\mathcal H_1$) whose maximum conversation turns is 3.

Table~\ref{tab:trunc3-rounds} reports an additional \methodname{} run under this setting. Although the trained model still improves round by round, its final performance remains below the corresponding results in Tab.~\ref{tab:main-results}.
We attribute the gap to the sparsity of the extended margin: $\tilde r_{\text{acc-mgn}}$ with  favors questions requiring more than 3 turns, so $\tilde r_{\text{acc-mgn}}$
is near zero on most of the batch, and the effective learning signal is much weaker than that of $r_{\text{acc-mgn}}$, which differs from the direct-answering baseline on a far broader set of prompts.

\begin{table}[t]
\centering\small
\caption{\textbf{Per-round solver performance} under \eqref{eq:extend-acc-mgn} setup, where $\mathcal{H}_2$ is a truncated variant of ${\mathcal{H}_1}$. Parenthesized values are the change versus the previous round.
% \textcolor{green!60!black}{green} for gains and \textcolor{red!80!black}{red} for drops.
}
\label{tab:trunc3-rounds}
\begin{tabular}{@{}lllll@{}}
\toprule
& \multicolumn{1}{c}{Agentic} & \multicolumn{1}{c}{Visual Search} & \multicolumn{2}{c}{Mathematical Reasoning} \\
\cmidrule(lr){2-2}\cmidrule(lr){3-3}\cmidrule(lr){4-5}
& TIR-Bench & V* & {MathVision} & WeMath \\
\midrule
Qwen3-4B-SFT & 22.3 & 78.0 & \textbf{38.8} & 57.4 \\
Round 1 & 22.6\up{0.3} & 78.5\up{0.5} & 35.1\down{3.7} & 57.7\up{0.3} \\
Round 2 & 22.4\down{0.2} & \textbf{79.1}\up{0.5} & 35.8\up{0.6} & 59.0\up{1.3} \\
Round 3 & \textbf{23.2}\up{0.7} & 77.5\down{1.6} & 38.0\up{2.1} & \textbf{62.5}\up{3.5} \\
\bottomrule
\end{tabular}
% \vspace{-0.6cm}
\end{table}

\section{Related Work}

\textbf{Reinforcement Learning for Large Language Models.}
Reinforcement learning (RL) with verifiable rewards (RLVR) fine-tunes models against rule-based checkers in domains with objective correctness criteria such as mathematics and code \citep{shao2024deepseekmath, DBLP:conf/nips/YuZZYZYDFLLLLLL25, zhao2026geometric, DBLP:journals/corr/abs-2507-18071}. 
%paired increasingly
A common prerequisite of RLVR is a large corpus of human-curated tasks with gold labels, which is costly to produce and expensive to scale \citep{su2025crossing}.
To relax the dependency of human annotation, label-free RL derives reward signals from the model's own outputs \citep{agarwal2026unreasonable, zuo2026ttrl, zhang2026consistent, zhang2026right, zhao2026absolute, zhao2026learning, he2026far}. 
These methods remove the need for gold answers but still presuppose a pre-existing corpus of unlabeled questions, falling short of fully autonomous self-improvement.

%Self-evolving LLMs single typically
\textbf{Self-Evolving Large Language Models.}
This aims to eliminate both human-written tasks and labels by letting models generate, refine, and learn from their own experience~\citep{tao2024survey}. Early instantiations adapt self-play to language modeling: a model acts as multiple roles to bootstrap preference or instruction data without annotation \citep{chen2024self, yuan2024self, wu2025self, huang2026r}. 
Self-challenging agents go further by synthesizing the tasks themselves, 
but rely on external verifiers to guarantee feasibility and reliability, 
confining them to domains with an explicit correctness oracle~\citep{zhao2026absolute, wang2026co, xia2025agent0, zhou2026self, lin2025learning, yue2026dr}.
%on visual tasks additional
Despite of the progress, existing self-evolution methods largely rely on pre-defined difficulty or uncertainty metrics to guide the generation of tasks.
For visual agents that largely depend on tool invocation, the plausibility of these metrics requires to be further elaborated.
In this study, we define the margin reward metric, extending self-evolution beyond text-only reasoning to visual agents, and enabling co-evolution of solvers, challengers and tasks towards higher reasoning capacity in visual domains.

\section{Conclusion}

We proposed \methodname{}, a simple yet effective approach for visual agent self-evolution without using human-annotated data.
%
%\methodname{} automatically constructs a post-training curriculum from unlabeled images using a margin-based reward $r_{\text{acc-mgn}}$, which encourages agents to progressively master tool invocation.
In specific, we introduced a margin reward to the challenger, defined as solver's performance difference with and without access to tools. 
Based on the defined margin reward, we create an adaptive curriculum that encourages the solver to learn effective tool invocation.
Through extensive experiments on challenging benchmarks spanning diverse model architectures and agent harnesses, we validated \methodname{}'s consistently performance gains, with striking contrast to the baseline and the state-of-the-art approaches. 
\methodname{} provides a fresh insight to the development of self-evolving agents for a broad range of multimodal tasks. 

\nocite{qwen38}

\bibliography{iclr2027_conference}

@inproceedings{huang2026r,
  title={R-zero: Self-evolving reasoning llm from zero data},
  author={Huang, Chengsong and Yu, Wenhao and Wang, Xiaoyang and Zhang, Hongming and Li, Zongxia and Li, Ruosen and Huang, Jiaxin and Mi, Haitao and Yu, Dong},
  booktitle={International Conference on Learning Representations},
  volume={2026},
  pages={130770--130790},
  year={2026}
}

@article{zhao2026pyvisionrl,
  title={PyVision-RL: Forging Open Agentic Vision Models via RL.},
  author={Zhao, Shitian and Lin, Shaoheng and Li, Ming and Zhang, Haoquan and Peng, Wenshuo and Zhang, Kaipeng and Wei, Chen},
  journal={arxiv preprint arxiv:2602.20739},
  year={2026},
}

@article{zhao2025pyvision,
  title={Pyvision: Agentic vision with dynamic tooling},
  author={Zhao, Shitian and Zhang, Haoquan and Lin, Shaoheng and Li, Ming and Wu, Qilong and Zhang, Kaipeng and Wei, Chen},
  journal={arXiv preprint arXiv:2507.07998},
  year={2025}
}

@article{li2025tir,
  title={Tir-bench: A comprehensive benchmark for agentic thinking-with-images reasoning},
  author={Li, Ming and Zhong, Jike and Zhao, Shitian and Zhang, Haoquan and Lin, Shaoheng and Lai, Yuxiang and Wei, Chen and Psounis, Konstantinos and Zhang, Kaipeng},
  journal={arXiv preprint arXiv:2511.01833},
  year={2025}
}

@inproceedings{DBLP:conf/cvpr/WuX24a,
  author       = {Penghao Wu and
                  Saining Xie},
  title        = {V*: Guided Visual Search as a Core Mechanism in Multimodal LLMs},
  booktitle    = {{IEEE/CVF} Conference on Computer Vision and Pattern Recognition,
                  {CVPR} 2024, Seattle, WA, USA, June 16-22, 2024},
  pages        = {13084--13094},
  publisher    = {{IEEE}},
  year         = {2024},
}

@inproceedings{DBLP:conf/nips/WangPSLRZZL24,
  author       = {Ke Wang and
                  Junting Pan and
                  Weikang Shi and
                  Zimu Lu and
                  Houxing Ren and
                  Aojun Zhou and
                  Mingjie Zhan and
                  Hongsheng Li},
  title        = {Measuring Multimodal Mathematical Reasoning with MATH-Vision Dataset},
  booktitle    = {NeurIPS},
  year         = {2024},
}

@inproceedings{DBLP:conf/eccv/ZhangJZLGQZLCQGL24,
  author       = {Renrui Zhang and
                  Dongzhi Jiang and
                  Yichi Zhang and
                  Haokun Lin and
                  Ziyu Guo and
                  Pengshuo Qiu and
                  Aojun Zhou and
                  Pan Lu and
                  Kai{-}Wei Chang and
                  Yu Qiao and
                  Peng Gao and
                  Hongsheng Li},
  title        = {{MATHVERSE:} Does Your Multi-modal {LLM} Truly See the Diagrams in
                  Visual Math Problems?},
  booktitle    = {ECCV},
  volume       = {15066},
  pages        = {169--186},
  publisher    = {Springer},
  year         = {2024},
}

@article{qiao2024we,
      title={We-Math: Does Your Large Multimodal Model Achieve Human-like Mathematical Reasoning?},
      author={Qiao, Runqi and Tan, Qiuna and Dong, Guanting and Wu, Minhui and Sun, Chong and Song, Xiaoshuai and GongQue, Zhuoma and Lei, Shanglin and Wei, Zhe and Zhang, Miaoxuan and others},
      journal={arXiv preprint arXiv:2407.01284},
      year={2024}
    }

@misc{RealWorldQA2024,
  title        = {RealWorldQA: Real-World Spatial Understanding Benchmark},
  author       = {{xAI}},
  year         = {2024},
}

@article{hrbench,
      title={Divide, Conquer and Combine: A Training-Free Framework for High-Resolution Image Perception in Multimodal Large Language Models}, 
      author={Wenbin Wang and Liang Ding and Minyan Zeng and Xiabin Zhou and Li Shen and Yong Luo and Dacheng Tao},
      year={2024},
      journal={arXiv preprint},
}

@article{shao2024deepseekmath,
  title={Deepseekmath: Pushing the limits of mathematical reasoning in open language models},
  author={Shao, Zhihong and Wang, Peiyi and Zhu, Qihao and Xu, Runxin and Song, Junxiao and Bi, Xiao and Zhang, Haowei and Zhang, Mingchuan and Li, YK and Wu, Yang and others},
  journal={arXiv preprint arXiv:2402.03300},
  year={2024}
}

@inproceedings{DBLP:conf/nips/YuZZYZYDFLLLLLL25,
  author       = {Qiying Yu and
                  Zheng Zhang and
                  Ruofei Zhu and
                  Yufeng Yuan and
                  Xiaochen Zuo and
                  Yu Yue and
                  Weinan Dai and
                  Tiantian Fan and
                  Gaohong Liu and
                  Juncai Liu and
                  Lingjun Liu and
                  Xin Liu and
                  Haibin Lin and
                  Zhiqi Lin and
                  Bole Ma and
                  Guangming Sheng and
                  Yuxuan Tong and
                  Chi Zhang and
                  Mofan Zhang and
                  Ru Zhang and
                  Wang Zhang and
                  Hang Zhu and
                  Jinhua Zhu and
                  Jiaze Chen and
                  Jiangjie Chen and
                  Chengyi Wang and
                  Hongli Yu and
                  Yuxuan Song and
                  Xiangpeng Wei and
                  Hao Zhou and
                  Jingjing Liu and
                  Wei{-}Ying Ma and
                  Ya{-}Qin Zhang and
                  Lin Yan and
                  Yonghui Wu and
                  Mingxuan Wang},
  title        = {{DAPO:} An Open-Source {LLM} Reinforcement Learning System at Scale},
  booktitle    = {Advances in Neural Information Processing Systems},
  year         = {2025},
}

@inproceedings{zhao2026geometric,
  title={Geometric-mean policy optimization},
  author={Zhao, Yuzhong and Liu, Yue and Liu, Junpeng and Chen, Jingye and Hao, Yaru and Lv, Tengchao and Huang, Shaohan and Cui, Lei and Ye, Qixiang and Wan, Fang and others},
  booktitle={International Conference on Learning Representations},
  volume={2026},
  pages={41354--41374},
  year={2026}
}

@article{DBLP:journals/corr/abs-2507-18071,
  author       = {Chujie Zheng and
                  Shixuan Liu and
                  Mingze Li and
                  Xiong{-}Hui Chen and
                  Bowen Yu and
                  Chang Gao and
                  Kai Dang and
                  Yuqiong Liu and
                  Rui Men and
                  An Yang and
                  Jingren Zhou and
                  Junyang Lin},
  title        = {Group Sequence Policy Optimization},
  journal={arxiv preprint arxiv:2507.18071},
  year         = {2025},
}

@article{su2025crossing,
  title={Crossing the reward bridge: Expanding rl with verifiable rewards across diverse domains},
  author={Su, Yi and Yu, Dian and Song, Linfeng and Li, Juntao and Mi, Haitao and Tu, Zhaopeng and Zhang, Min and Yu, Dong},
  journal={arXiv preprint arXiv:2503.23829},
  year={2025}
}

@article{zhao2026absolute,
  title={Absolute zero: Reinforced self-play reasoning with zero data},
  author={Zhao, Andrew and Wu, Yiran and Wu, Tong and Xu, Quentin and Yue, Yang and Lin, Matthieu and Wang, Shenzhi and Wu, Qingyun and Zheng, Zilong and Huang, Gao},
  journal={Advances in Neural Information Processing Systems},
  volume={38},
  pages={105816--105879},
  year={2026}
}

@article{zuo2026ttrl,
  title={{TTRL}: Test-time reinforcement learning},
  author={Zuo, Yuxin and Zhang, Kaiyan and Sheng, Li and Qu, Shang and Cui, Ganqu and Zhu, Xuekai and Li, Haozhan and Long, Xinwei and Hua, Ermo and Qi, Biqing and others},
  journal={Advances in Neural Information Processing Systems},
  volume={38},
  pages={131459--131483},
  year={2026}
}

@article{zhang2026right,
  title={Right question is already half the answer: Fully unsupervised llm reasoning incentivization},
  author={Zhang, Qingyang and Wu, Haitao and Zhang, Changqing and Zhao, Peilin and Bian, Yatao},
  journal={Advances in neural information processing systems},
  volume={38},
  pages={67345--67372},
  year={2026}
}

@article{zhang2026consistent,
  title={Consistent paths lead to truth: Self-rewarding reinforcement learning for llm reasoning},
  author={Zhang, Kongcheng and Yao, Qi and Liu, Shunyu and Wang, Yingjie and Lai, Baisheng and Ye, Jieping and Song, Mingli and Tao, Dacheng},
  journal={Advances in Neural Information Processing Systems},
  volume={38},
  pages={59849--59887},
  year={2026}
}

@inproceedings{zhao2026learning,
  title={Learning to reason without external rewards},
  author={Zhao, Xuandong and Kang, Zhewei and Feng, Aosong and Levine, Sergey and Song, Dawn},
  booktitle={International Conference on Learning Representations},
  volume={2026},
  pages={2548--2581},
  year={2026}
}

@article{agarwal2026unreasonable,
  title={The unreasonable effectiveness of entropy minimization in llm reasoning},
  author={Agarwal, Shivam and Zhang, Zimin and Yuan, Lifan and Han, Jiawei and Peng, Hao},
  journal={Advances in Neural Information Processing Systems},
  volume={38},
  pages={107150--107180},
  year={2026}
}

@article{tao2024survey,
  title={A survey on self-evolution of large language models},
  author={Tao, Zhengwei and Lin, Ting-En and Chen, Xiancai and Li, Hangyu and Wu, Yuchuan and Li, Yongbin and Jin, Zhi and Huang, Fei and Tao, Dacheng and Zhou, Jingren},
  journal={arXiv preprint arXiv:2404.14387},
  year={2024}
}

@article{Qwen2.5-VL,
  title={Qwen2.5-VL Technical Report},
  author={Bai, Shuai and Chen, Keqin and Liu, Xuejing and Wang, Jialin and Ge, Wenbin and Song, Sibo and Dang, Kai and Wang, Peng and Wang, Shijie and Tang, Jun and Zhong, Humen and Zhu, Yuanzhi and Yang, Mingkun and Li, Zhaohai and Wan, Jianqiang and Wang, Pengfei and Ding, Wei and Fu, Zheren and Xu, Yiheng and Ye, Jiabo and Zhang, Xi and Xie, Tianbao and Cheng, Zesen and Zhang, Hang and Yang, Zhibo and Xu, Haiyang and Lin, Junyang},
  journal={arXiv preprint arXiv:2502.13923},
  year={2025}
}

@article{Qwen3-VL,
      title={Qwen3-VL Technical Report}, 
      author={Shuai Bai and Yuxuan Cai and Ruizhe Chen and Keqin Chen and Xionghui Chen and Zesen Cheng and Lianghao Deng and Wei Ding and Chang Gao and Chunjiang Ge and Wenbin Ge and Zhifang Guo and Qidong Huang and Jie Huang and Fei Huang and Binyuan Hui and Shutong Jiang and Zhaohai Li and Mingsheng Li and Mei Li and Kaixin Li and Zicheng Lin and Junyang Lin and Xuejing Liu and Jiawei Liu and Chenglong Liu and Yang Liu and Dayiheng Liu and Shixuan Liu and Dunjie Lu and Ruilin Luo and Chenxu Lv and Rui Men and Lingchen Meng and Xuancheng Ren and Xingzhang Ren and Sibo Song and Yuchong Sun and Jun Tang and Jianhong Tu and Jianqiang Wan and Peng Wang and Pengfei Wang and Qiuyue Wang and Yuxuan Wang and Tianbao Xie and Yiheng Xu and Haiyang Xu and Jin Xu and Zhibo Yang and Mingkun Yang and Jianxin Yang and An Yang and Bowen Yu and Fei Zhang and Hang Zhang and Xi Zhang and Bo Zheng and Humen Zhong and Jingren Zhou and Fan Zhou and Jing Zhou and Yuanzhi Zhu and Ke Zhu},
	  journal={arXiv preprint arXiv:2511.21631},
      year={2025}
}

@article{chen2024self,
  title={Self-play fine-tuning converts weak language models to strong language models},
  author={Chen, Zixiang and Deng, Yihe and Yuan, Huizhuo and Ji, Kaixuan and Gu, Quanquan},
  journal={arXiv preprint arXiv:2401.01335},
  year={2024}
}

@article{yuan2024self,
  title={Self-rewarding language models},
  author={Yuan, Weizhe and Pang, Richard Yuanzhe and Cho, Kyunghyun and Li, Xian and Sukhbaatar, Sainbayar and Xu, Jing and Weston, Jason},
  journal={arXiv preprint arXiv:2401.10020},
  year={2024}
}

@inproceedings{wu2025self,
  title={Self-play preference optimization for language model alignment},
  author={Wu, Yue and Sun, Zhiqing and Hughes, Rina and Ji, Kaixuan and Yang, Yiming and Gu, Quanquan},
  booktitle={International Conference on Learning Representations},
  volume={2025},
  pages={91558--91582},
  year={2025}
}

@article{xia2025agent0,
  title={Agent0: Unleashing self-evolving agents from zero data via tool-integrated reasoning},
  author={Xia, Peng and Zeng, Kaide and Liu, Jiaqi and Qin, Can and Wu, Fang and Zhou, Yiyang and Xiong, Caiming and Yao, Huaxiu},
  journal={arXiv preprint arXiv:2511.16043},
  year={2025}
}

@article{wang2026co,
  title={Co-evolving llm coder and unit tester via reinforcement learning},
  author={Wang, Yinjie and Yang, Ling and Tian, Ye and Shen, Ke and Wang, Mengdi},
  journal={Advances in Neural Information Processing Systems},
  volume={38},
  pages={143630--143664},
  year={2026}
}

@article{zhou2026self,
  title={Self-challenging language model agents},
  author={Zhou, Yifei and Levine, Sergey and Weston, Jason and Li, Xian and Sukhbaatar, Sainbayar},
  journal={Advances in Neural Information Processing Systems},
  volume={38},
  pages={113959--113991},
  year={2026}
}

@article{lin2025learning,
  title={Learning to solve and verify: A self-play framework for code and test generation},
  author={Lin, Zi and Shen, Sheng and Kulikov, Ilia and Shang, Jingbo and Weston, Jason and Nie, Yixin},
  journal={arXiv preprint arXiv:2502.14948},
  year={2025}
}

@article{yue2026dr,
  title={Dr. zero: Self-evolving search agents without training data},
  author={Yue, Zhenrui and Upasani, Kartikeya and Yang, Xianjun and Ge, Suyu and Nie, Shaoliang and Mao, Yuning and Liu, Zhe and Wang, Dong},
  journal={arXiv preprint arXiv:2601.07055},
  year={2026}
}

@article{comanici2025gemini,
  title={Gemini 2.5: Pushing the frontier with advanced reasoning, multimodality, long context, and next generation agentic capabilities},
  author={Comanici, Gheorghe and Bieber, Eric and Schaekermann, Mike and Pasupat, Ice and Sachdeva, Noveen and Dhillon, Inderjit and Blistein, Marcel and Ram, Ori and Zhang, Dan and Rosen, Evan and others},
  journal={arXiv preprint arXiv:2507.06261},
  year={2025}
}

@article{liu2025agent0,
  title={Agent0-vl: Exploring self-evolving agent for tool-integrated vision-language reasoning},
  author={Liu, Jiaqi and Xiong, Kaiwen and Xia, Peng and Zhou, Yiyang and Ji, Haonian and Feng, Lu and Han, Siwei and Ding, Mingyu and Yao, Huaxiu},
  journal={arXiv preprint arXiv:2511.19900},
  year={2025}
}

@inproceedings{hong2026deepeyesv2,
  title={Deepeyesv2: Toward agentic multimodal model},
  author={Hong, Jack and Zhao, Chenxiao and Zhu, ChengLin and Lu, Weiheng and Xu, Guohai},
  booktitle={International Conference on Learning Representations},
  volume={2026},
  pages={114851--114872},
  year={2026}
}

@misc{o3,
    author = {OpenAI},
    title = {Introducing o3 and o4-mini},
    year = {2025}
}

@inproceedings{zheng2026deepeyes,
  title={Deepeyes: Incentivizing" thinking with images" via reinforcement learning},
  author={Zheng, Ziwei and Yang, Minghao and Hong, Jack and Zhao, Chenxiao and Xu, Guohai and Yang, Le and Shen, Chao},
  booktitle={International Conference on Learning Representations},
  volume={2026},
  pages={126775--126798},
  year={2026}
}

@misc{qwen38,
    title = {{Qwen3.8-Max}: A New Bar for Coding and Cowork},
    author = {{Qwen Team}},
    month = {August},
    year = {2026}
}

@inproceedings{he2026far,
  title={How Far Can Unsupervised RLVR Scale LLM Training?},
  author={He, Bingxiang and Zuo, Yuxin and Liu, Zeyuan and Zhao, Shangziqi and Fu, Zixuan and Yang, Junlin and Qian, Cheng and Zhang, Kaiyan and Fan, Yuchen and Cui, Ganqu and others},
  booktitle={International Conference on Learning Representations},
  volume={2026},
  pages={14823--14865},
  year={2026}
}
\bibliographystyle{iclr2027_conference}

\newpage
\appendix

\section{LLM and AI Usage Statement}

In this work, we used generative AI tools for improving language clarity and grammar, writing utility scripts for data preprocessing and postprocessing, debugging, and restructuring the code base; core algorithmic implementation was done and reviewed by the authors. 
We have not used generative AI tools for experimental design, or drafting scientific arguments, and figure generation is not applicable to this work. 
Outputs from the AI-assisted tools are  meticulously reviewed, revised, and verified by the authors, who retain full responsibility for all content presented in this paper.
\section{Preliminaries}
\label{app:preliminaries}

\subsection{Visual Agents}
\label{app:visual-agents}

Visual agents extend LLMs with a closed observe–reason–act loop over images: rather than answering in a single pass, they iteratively acquire visual evidence before responding. In this study, we mainly consider the following two types of visual agents, as shown in Fig.~\ref{fig:visual-agent-flowchart}.

\paragraph{Zoom-in–only agents} (e.g., DeepEyes~\citep{zheng2026deepeyes}). These agents treat high-resolution understanding as sequential search. Given a downsampled full image, the agent issues zoom-in actions that crop and enlarge regions from the original resolution, progressively resolving fine details (small text, dense charts, tiny UI elements). The minimal action space (zoom-in + answer) makes the policy simple and robust, but offers no computational manipulation — all evidence must be located via spatial attention alone.

\paragraph{Code-execution agents} (e.g., PyVision~\citealp{zhao2025pyvision}). Here the action space is a Python interpreter: the agent writes code to crop regions, run OCR, detect contours, sample pixels, or extract chart values, computing visual evidence rather than merely viewing it. This enables precise quantitative answers but risks buggy code, fragile heuristics, and long unproductive traces, complicating control and termination.

\begin{figure}[t]
    \centering
    \includegraphics[width=1\linewidth]{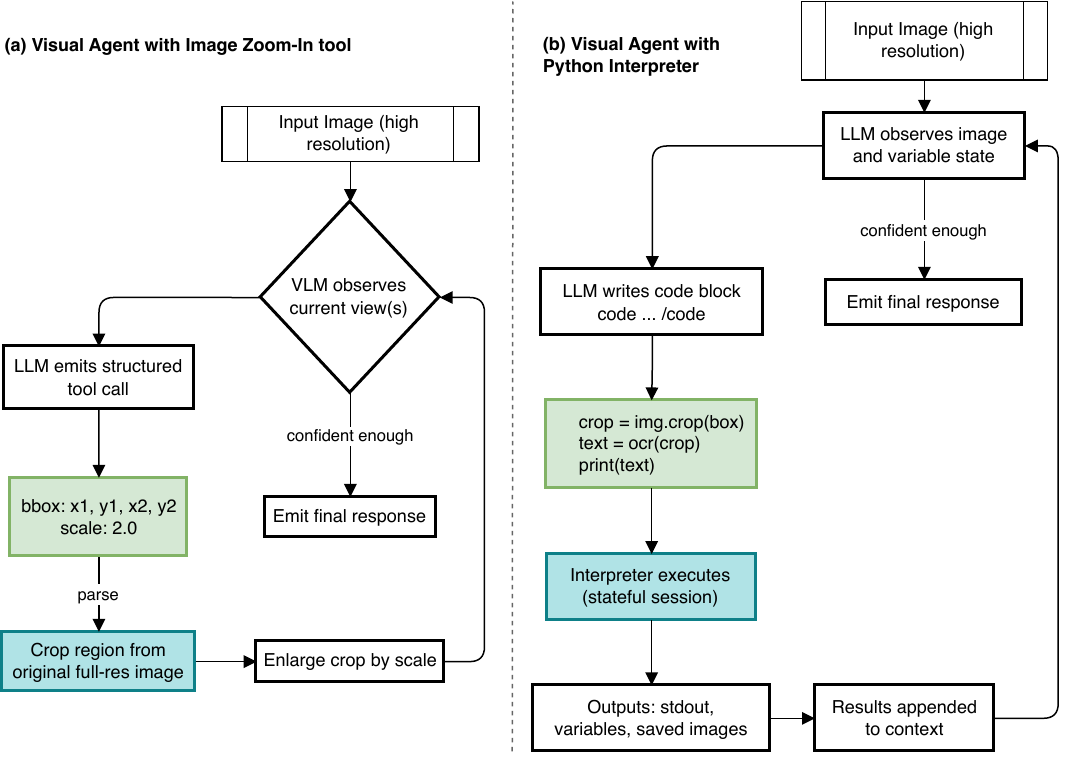}
    \caption{Visual Agent Flowcharts}
    \label{fig:visual-agent-flowchart}
\end{figure}

\subsection{Reinforcement Learning with GRPO}
\label{app:grpo}

Both agents are trained with Group Relative Policy Optimization
(GRPO)~\citep{shao2024deepseekmath}. For each prompt $q$, the behavior policy
$\pi_{\theta_{\mathrm{old}}}$ samples a group of $G$ responses
$\{o_1, \dots, o_G\}$ with scalar rewards $\{r_1, \dots, r_G\}$. In place of a
learned value function, GRPO uses the group mean as a baseline, normalizing
each reward within its group:
\begin{equation}
\hat{A}_i \;=\; \frac{r_i - \operatorname{mean}(\{r_j\}_{j=1}^{G})}
                    {\operatorname{std}(\{r_j\}_{j=1}^{G})},
\label{eq:grpo-advantage}
\end{equation}
and maximizes a clipped surrogate over all response tokens:
\begin{equation}
\mathcal{J}(\theta) = \mathbb{E}\!\left[
\frac{1}{G}\sum_{i=1}^{G} \frac{1}{|o_i|} \sum_{t=1}^{|o_i|}
\min\!\Big(
\rho_{i,t}\,\hat{A}_i,\;
\operatorname{clip}\big(\rho_{i,t},\, 1-\varepsilon_{\mathrm{lo}},\,
1+\varepsilon_{\mathrm{hi}}\big)\,\hat{A}_i
\Big)\right],
\label{eq:grpo-objective}
\end{equation}
where $\rho_{i,t} = \pi_{\theta}(o_{i,t} \mid q, o_{i,<t}) \,/\,
\pi_{\theta_{\mathrm{old}}}(o_{i,t} \mid q, o_{i,<t})$ is the per-token
importance ratio against the behavior policy. We use asymmetric clipping
bounds $\varepsilon_{\mathrm{lo}} = 0.2$, $\varepsilon_{\mathrm{hi}} = 0.28$
and no KL penalty against a reference model.

\subsection{Asynchronous Training}
\label{app:async-training}

In synchronous on-policy training, each batch is generated by the current
policy, $\pi_{\theta_{\mathrm{old}}} = \pi_{\theta}$, and the importance
ratios in ~\eqref{eq:grpo-objective} start at $1$. In the
\emph{asynchronous} variant used for Solver training, generation of the next
batch overlaps the gradient step on the current one, so each batch is produced
by a policy that lags the learner by one step: for the batch consumed at
learner step $\theta_t$,
\begin{equation}
\pi_{\theta_{\mathrm{old}}} \;=\; \pi_{\theta_{t-1}},
\qquad
\rho_{i,t} \;=\;
\frac{\pi_{\theta_t}(o_{i,t} \mid q, o_{i,<t})}
     {\pi_{\theta_{t-1}}(o_{i,t} \mid q, o_{i,<t})}.
\label{eq:async-ratio}
\end{equation}
The objective is otherwise unchanged: the importance ratio in \eqref{eq:grpo-objective} corrects for the one-step staleness exactly,
and the clip bounds the deviation it can introduce. The group-relative
advantage of \eqref{eq:grpo-advantage} is unaffected, since it depends
only on rewards within a batch, all sampled from the same stale policy.

\section{Nash Equilibrium Analysis}
\label{app:nash}

We formalize the equilibrium characterization stated in Section~\ref{sec:equilibrium}. For generality, we use the extended form (\eqref{eq:extend-acc-mgn}) of the accuracy margin to develop the analysis.
Throughout, we regard the alternating optimization in \eqref{eq:objection-general-form} as a two-player game between the challenger and the solver.

\paragraph{Setup.}
For notational simplicity, let $q=(I,x,y)$ denote a proposed problem, and define
\begin{equation}
    p_i(q;\theta)
    :=
    \pi_{\theta}^{(\mathcal H_i)}(y\mid I,x),
    \qquad i\in\{1,2\},
\end{equation}
together with the harness margin
\begin{equation}
    \Delta_\theta(q)
    :=
    p_1(q;\theta)-p_2(q;\theta).
\end{equation}
Let $\rho_\phi$ denote the joint distribution over $q$ induced by $I\sim\mathcal I$ and $(x,y)\sim\pi_\phi(\cdot\mid I)$.

\paragraph{Two-Player Formulation.}
Conditioned on a proposed pair $(I,x,y)$, the expected solver accuracy is
\begin{equation}
    \mathbb E_{\hat y\sim\pi_\theta^{(\mathcal H_1)}(\cdot\mid I,x)}
    \left[\mathbf 1_{\{\hat y=y\}}\right]
    =
    p_1(q;\theta).
\end{equation}
Hence the challenger and solver payoffs can be written as
\begin{equation}
    \label{eq:app-payoff}
    U_{\mathrm C}(\phi,\theta)
    =
    \mathbb E_{q\sim\rho_\phi}
    \left[\Delta_\theta(q)\right],
    \quad \text{and} \quad
    U_{\mathrm S}(\phi,\theta)
    =
    \mathbb E_{q\sim\rho_\phi}
    \left[p_1(q;\theta)\right],
\end{equation}
respectively. A pair $(\phi^*,\theta^*)$ is a Nash equilibrium if
\begin{equation}
    U_{\mathrm C}(\phi^*,\theta^*)
    \geq
    U_{\mathrm C}(\phi,\theta^*)
    \quad \text{ and } \quad
    U_{\mathrm S}(\phi^*,\theta^*)
    \geq
    U_{\mathrm S}(\phi^*,\theta)
    \quad\text{for all admissible }\phi,\theta.
\end{equation}
% For nonconvex policies, the same conditions can be interpreted locally, in which case the results below characterize stable local Nash equilibria.

% \paragraph{Support optimality of the challenger.}
% The challenger payoff \eqref{eq:app-challenger-payoff} is linear in $\rho_\phi$. Consequently, at equilibrium all problems receiving positive challenger probability must attain the same maximal margin:
% \begin{equation}
%     \label{eq:app-support-optimality}
%     q\in\operatorname{supp}(\rho_{\phi^*})
%     \quad\Longrightarrow\quad
%     \Delta_{\theta^*}(q)
%     =
%     m^*
%     :=
%     \max_{q'}\Delta_{\theta^*}(q').
% \end{equation}
% Otherwise, the challenger could strictly improve its payoff by moving probability mass from a lower-margin problem to a higher-margin problem. 
% Thus, the equilibrium support is contained in the current margin-maximizing problem frontier.

\begin{proposition}[Support optimality of the challenger]
\label{prop:support-optimality}
Fix the solver parameters $\theta$, and let $\mathcal Q$ denote the feasible set of proposed problems $q=(I,x,y)$. Assume that the harness margin $\Delta_\theta:\mathcal Q\to\mathbb R$ is bounded and that
\begin{equation}
    m_\theta^*
    :=
    \max_{q'\in\mathcal Q}\Delta_\theta(q')
\end{equation}
is attained. Then a challenger distribution $\rho_\phi$ is a best response to $\theta$ if and only if
\begin{equation}
    \label{eq:support-optimality-a.s.}
    \rho_\phi\!\left(
    \left\{
    q\in\mathcal Q:
    \Delta_\theta(q)=m_\theta^*
    \right\}
    \right)=1.
\end{equation}
% Equivalently,
% \begin{equation}
%     \label{eq:support-optimality-supp}
%     q\in\operatorname{supp}(\rho_\phi)
%     \quad\Longrightarrow\quad
%     \Delta_\theta(q)=m_\theta^*.
% \end{equation}
% At a Nash equilibrium $(\phi^*,\theta^*)$, condition \eqref{eq:support-optimality-supp} holds with $\theta=\theta^*$ and $\rho_\phi=\rho_{\phi^*}$.
\end{proposition}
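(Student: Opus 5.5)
The plan is to exploit the fact that, with $\theta$ fixed, the challenger payoff $U_{\mathrm C}(\phi,\theta)=\mathbb E_{q\sim\rho_\phi}[\Delta_\theta(q)]$ from \eqref{eq:app-payoff} is a linear functional of the distribution $\rho_\phi$. I will treat the admissible challenger distributions as all probability measures on $\mathcal Q$, or at least a class containing the point mass $\delta_{q^*}$ at a maximizer $q^*$ of $\Delta_\theta$; the existence of $q^*$ is exactly the attainment assumption. Boundedness of $\Delta_\theta$ guarantees that every expectation below is finite, so no integrability issues arise.

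First I will identify the best-response value. For any $\rho_\phi$ we have $\Delta_\theta(q)\le m_\theta^*$ pointwise, hence $U_{\mathrm C}(\phi,\theta)\le m_\theta^*$. The point mass $\delta_{q^*}$ attains $m_\theta^*$, so the supremum of $U_{\mathrm C}(\cdot,\theta)$ is exactly $m_\theta^*$. Consequently, $\rho_\phi$ is a best response if and only if $\mathbb E_{\rho_\phi}[\Delta_\theta]=m_\theta^*$.

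Next I will prove both directions of the equivalence with \eqref{eq:support-optimality-a.s.}.

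\emph{Sufficiency.} If $\rho_\phi$ puts full mass on the set $\{\Delta_\theta=m_\theta^*\}$, then the expectation equals $m_\theta^*$ immediately, so $\rho_\phi$ is a best response.

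\emph{Necessity.} Consider the nonnegative bounded function $g:=m_\theta^*-\Delta_\theta\ge 0$. A best response satisfies $\mathbb E_{\rho_\phi}[g]=0$. A nonnegative function with zero integral vanishes $\rho_\phi$-almost surely, so $\rho_\phi(\{g=0\})=1$, which is precisely \eqref{eq:support-optimality-a.s.}. To spell out the contrapositive, if $\rho_\phi(\{g>0\})>0$, then by continuity of measure there is an $\epsilon>0$ with $\rho_\phi(\{g>\epsilon\})>0$. This gives $\mathbb E_{\rho_\phi}[g]\ge\epsilon\,\rho_\phi(\{g>\epsilon\})>0$, contradicting optimality.

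The main obstacle is not analytic but definitional. I must make precise what "admissible" challenger distributions are, and in particular ensure the comparison class contains a distribution achieving $m_\theta^*$, such as $\delta_{q^*}$. I must also ensure that $\{\Delta_\theta=m_\theta^*\}$ is measurable, which requires $\Delta_\theta$ to be measurable on $\mathcal Q$. I will state these as standing conventions: the admissible class is all probability measures on $\mathcal Q$, and $\Delta_\theta$ is measurable. If the parametric family $\{\rho_\phi\}$ were restricted, the sufficiency direction would still hold, but necessity would need $m_\theta^*$ to be attainable within the family. I will note this caveat explicitly.
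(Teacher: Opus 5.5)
Your proposal is correct and follows the same route as the paper. Both arguments bound $U_{\mathrm C}(\phi,\theta)\le m_\theta^*$ pointwise, get sufficiency immediately, and get necessity by showing that positive $\rho_\phi$-mass on $\{\Delta_\theta<m_\theta^*\}$ forces a strict inequality. You also make explicit two details the paper leaves implicit: the point mass $\delta_{q^*}$ must be admissible, so that falling short of $m_\theta^*$ really rules out a best response, and the continuity-of-measure step is what justifies the strict inequality.
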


\begin{proof}
For fixed $\theta$, the challenger payoff is $U_{\mathrm C} (\phi, \theta) = \mathbb E_{q\sim\rho_\phi}[\Delta_\theta(q)]$.
Since $\Delta_\theta(q)\leq m_\theta^*$ for every $q$, we immediately have $U_{\mathrm C}(\phi,\theta)\leq m_\theta^*$.
If $\rho_\phi$ is supported on
$\{q:\Delta_\theta(q)=m_\theta^*\}$, then
$U_{\mathrm C}(\phi,\theta)=m_\theta^*$, so $\rho_\phi$ is a best response.

Conversely, suppose that $\rho_\phi$ assigns positive probability to the suboptimal set
\begin{equation}
    A
    :=
    \{q\in\mathcal Q:\Delta_\theta(q)<m_\theta^*\}.
\end{equation}
Then
\begin{equation}
\begin{aligned}
U_{\mathrm C}(\phi,\theta)
&=
\int_A\Delta_\theta(q)\,d\rho_\phi(q)
+
\int_{A^c}\Delta_\theta(q)\,d\rho_\phi(q) \\
& < m_\theta^*\rho_\phi(A) + m_\theta^*\rho_\phi(A^c) \\
&=
m_\theta^*.
\end{aligned}
\end{equation}
Thus $\rho_\phi$ is not a best response. Therefore, a best response must satisfy \eqref{eq:support-optimality-a.s.}.
\end{proof}

\paragraph{Solver regularity.}
% To exclude plateau equilibria at intermediate accuracy, we use the following local responsiveness condition: for every equilibrium-support problem $q$ with $p_1(q;\theta)<1$, there exists an admissible solver update direction $v$ such that
% \begin{equation}
%     \label{eq:app-local-responsiveness}
%     D_v p_1(q;\theta)>0,
%     \qquad
%     D_v p_1(q';\theta)\geq 0
%     \quad
%     \text{for all other }q'\in\operatorname{supp}(\rho_\phi),
% \end{equation}
% where $D_v$ denotes the directional derivative along $v$. In words, an unsaturated problem on the challenger support can be locally improved without reducing the solver's accuracy on the remaining support. This condition holds, for example, when the relevant answer logit is locally controllable and the model is sufficiently expressive on the induced problem distribution. It rules out parameter plateaus in which a problem remains partially solved only because no feasible solver update can increase its accuracy.

Let $\mathcal A_{\theta} := \{q: p_1( q;\theta) < 1\}$ be the set of unsaturated problems of solver $\pi_\theta^{(\mathcal H_1)}$. 
To exclude plateau equilibria at intermediate accuracy, we use the following local regularity condition: if $\rho_\theta(\mathcal A_\theta) > 0$, then there exists some update vector $v$ and some $\varepsilon > 0$ such that
\begin{equation}
    \label{eq:app-local-responsiveness}
    \mathbb E_{q\sim\rho_{\phi}}\left[ p_1( q;\theta + \varepsilon v) - p_1( q;\theta) \right] > 0.
\end{equation}
If $p_1$ is differentiable in $\theta$, then it is equivalent to 
\[
\mathbb{E}_{q\sim\rho_{\phi}}\left[ D_vp_1(q;\theta) \right] > 0  \quad\text{for some $v$,}
\]
where $D_v$ denotes directional derivative.
We propose this condition to rule out parameter plateaus within $\Theta$ in which a problem remains partially solved only because no feasible solver update can increase its accuracy.

\begin{proposition}
% [binary saturation and the equilibrium dichotomy]
    Let $(\phi^*,\theta^*)$ be a Nash equilibrium satisfying Prop.~\ref{prop:support-optimality} and the local regularity condition \eqref{eq:app-local-responsiveness}. Then, for almost every $q\sim\rho_{\phi^*}$,
\begin{equation}
    \label{eq:app-target-saturation}
    p_1(q;\theta^*)=1.
\end{equation}
Moreover, exactly one of the following two cases holds on the equilibrium support:
\begin{enumerate}
    \item \textbf{Gap-closed equilibrium.} If $m^*=0$, then
    \begin{equation}
        p_1(q;\theta^*)=p_2(q;\theta^*)=1.
    \end{equation}
    The two harnesses are indistinguishable on the equilibrium problem distribution. In particular, the orchestration behavior exercised by $\mathcal H_1$ on these problems has been transferred to $\mathcal H_2$ through the shared solver weights.

    \item \textbf{Gap-saturated equilibrium.} If $m^*>0$, then
    \begin{equation}
        p_1(q;\theta^*)=1,
        \qquad
        p_2(q;\theta^*)=1-m^*<1.
    \end{equation}
    The solver has saturated its accuracy under $\mathcal H_1$, while the same weights remain inaccurate under $\mathcal H_2$. The residual margin therefore identifies a harness-specific capability gap.
\end{enumerate}
In particular, no positive-mass equilibrium problem satisfies
\begin{equation}
    0<p_1(q;\theta^*)<1.
\end{equation}
\end{proposition}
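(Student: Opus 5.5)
The plan is to establish target saturation \eqref{eq:app-target-saturation} from the solver's best-response condition. Then Prop.~\ref{prop:support-optimality} pins the value of $p_2$ on the support, and the dichotomy follows by reading off the sign of $m^*$.

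\emph{Step 1 (saturation).} Argue by contradiction. Suppose $\rho_{\phi^*}(\mathcal A_{\theta^*})>0$. The local regularity condition \eqref{eq:app-local-responsiveness}, applied at $(\phi^*,\theta^*)$, gives a direction $v$ and $\varepsilon>0$ with
\[
U_{\mathrm S}(\phi^*,\theta^*+\varepsilon v)-U_{\mathrm S}(\phi^*,\theta^*)=\mathbb E_{q\sim\rho_{\phi^*}}\left[p_1(q;\theta^*+\varepsilon v)-p_1(q;\theta^*)\right]>0.
\]
This contradicts the solver half of the Nash condition, provided $\theta^*+\varepsilon v$ is admissible. Hence $\rho_{\phi^*}(\mathcal A_{\theta^*})=0$, so $p_1(q;\theta^*)=1$ for $\rho_{\phi^*}$-a.e.\ $q$. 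I read the condition's hypothesis ``$\rho_\theta(\mathcal A_\theta)>0$'' as $\rho_{\phi}(\mathcal A_\theta)>0$. This step is where the real content lies. The main obstacle is that the regularity condition is assumed rather than derived, and one must make sure the perturbed parameter is admissible. Because the payoff is linear in $\rho_\phi$, no further argument is needed once the expectation strictly increases.

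\emph{Step 2 (value of $p_2$).} By Prop.~\ref{prop:support-optimality} with $\theta=\theta^*$, $\rho_{\phi^*}$-a.e.\ $q$ satisfies $\Delta_{\theta^*}(q)=m^*$. Intersecting this full-measure set with the one from Step 1 gives $p_2(q;\theta^*)=p_1(q;\theta^*)-m^*=1-m^*$ almost surely.

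\emph{Step 3 (dichotomy).} The cases $m^*=0$ and $m^*>0$ are mutually exclusive. To see that they are exhaustive, rule out $m^*<0$: since $p_2\ge 0$, we would have $m^*\ge\Delta(q)\ge p_1(q)-1$, which is not enough on its own. Instead, use Step 2 directly: $p_2=1-m^*\le 1$ forces $m^*\ge 0$. If $m^*=0$, then $p_1=p_2=1$ (gap-closed). If $m^*>0$, then $p_2=1-m^*<1$ (gap-saturated), and $m^*\le 1$ is automatic from $p_2\ge0$. The final claim, that no positive-mass problem has $0<p_1<1$, is a restatement of Step 1.
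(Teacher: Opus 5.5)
Your proposal is correct and follows the paper's proof essentially step for step: saturation comes from contradicting the solver's best-response condition via the regularity assumption \eqref{eq:app-local-responsiveness}, Prop.~\ref{prop:support-optimality} then gives $p_2(q;\theta^*)=1-m^*$ on the support, and $m^*<0$ is excluded because $p_2\le 1$. Reading $\rho_\theta(\mathcal A_\theta)$ as $\rho_{\phi}(\mathcal A_\theta)$ and intersecting the two full-measure sets explicitly are sensible clarifications of details the paper leaves implicit.
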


\begin{proof}
Suppose, for contradiction, that there is a positive-measure set of equilibrium-support problems with $p_1(q;\theta^*)<1$. 
By \eqref{eq:app-local-responsiveness}, there exists an admissible solver update that strictly increases $\mathbb E_q[p_1]$. This update increases $U_{\mathrm S}(\phi^*,\theta)$, contradicting the assumption that $\theta^*$ is a best response to $U_{\mathrm S}(\phi^*,\theta)$. 
Hence \eqref{eq:app-target-saturation} holds.

Proposition~\ref{prop:support-optimality} gives
$
\Delta_{\theta^*}(q) = 1-p_2(q;\theta^*) = m^*
$
throughout the equilibrium support. If $m^*=0$, then $p_2(q;\theta^*)=1$, yielding the gap-closed case. If $m^*>0$, then $p_2(q;\theta^*)=1-m^*<1$, yielding the gap-saturated case. Since probabilities are bounded above by one, no negative-margin case can occur once $p_1$ is saturated.
\end{proof}

\section{Implementation Details}

\subsection{Base Models}
\label{sec:base-models}

To demonstrate that {\bf \methodname} generalizes beyond a specific model family or scale, we instantiate our framework with two base models that differ substantially in architecture, training pipeline, and capacity: Qwen2.5-VL-7B~\citep{Qwen2.5-VL} and Qwen3-VL-4B~\citep{Qwen3-VL}.

Although both models follow the three-module VLM paradigm (vision encoder, vision-language merger, and LLM), their designs diverge in nearly every component: Qwen2.5-VL employs a ViT trained from scratch with window attention, compresses every $2\times2$ group of adjacent patch features into a single token via an MLP merger injected only at the LLM input, and uses MRoPE with contiguous temporal/height/width sections aligned to absolute time; in contrast, Qwen3-VL adopts a pretrained SigLIP-2 encoder continually trained at dynamic resolutions, introduces DeepStack to inject multi-level ViT features into the first several LLM layers, and uses Interleaved-MRoPE together with explicit text-based timestamps for temporal grounding.
The LLM backbones also belong to different generations (Qwen2.5 vs.\ Qwen3) with distinct pretraining corpora and post-training recipes, and the two models differ considerably in capacity (7B vs.\ 4B).

Importantly, {\bf \methodname} imposes no requirement on the base model beyond the ability to invoke tools and produce verifiable answers---the challenger--solver co-evolution, the accuracy-margin reward, and the GRPO-based updates are all model-agnostic---and as shown in Sec.~\ref{sec:experiment}, both models achieve consistent improvements after training, indicating that the gains stem from the paradigm itself rather than from idiosyncrasies of a particular backbone.

\subsection{SFT Details}
\label{app:sft-training-details}

For the PyVision harness, SFT checkpoints serve as the cold-start for self-evolution, as the base model struggles to operate the code interpreter directly. For Qwen2.5-VL-7B, an SFT checkpoint is readily provided by \citet{zhao2026pyvisionrl}; for Qwen3-VL-4B, the base model is instead finetuned on the same SFT data. The cold-start stage is lightweight, aiming only to familiarize the model with the harness.

\begin{figure}[h]
\centering
\begin{minipage}[t]{0.45\columnwidth}
\centering\small
\begin{tabular}{ll}
\toprule
\multicolumn{2}{l}{\emph{SFT configuration}} \\
\midrule
Base model & Qwen3-VL-4B-Instruct \\
Epochs & 1 (shuffled) \\
Global batch size & 16 \\
Micro batch size & 1 \\
Steps & 430 (6{,}880 episodes) \\
Learning rate & $1\times10^{-5}$ \\
LR schedule & cosine, $\min$ $1\times10^{-6}$ \\
Warmup fraction & 0.1 \\
Optimizer & Adam ($\beta_1{=}0.9$, $\beta_2{=}0.95$) \\
Weight decay & 0.1 \\
Sequence length & 16{,}384 \\
Vision tower & frozen \\
\bottomrule
\end{tabular}
\end{minipage}\hfill
\begin{minipage}[c]{0.52\columnwidth}
\centering
\includegraphics[width=\linewidth]{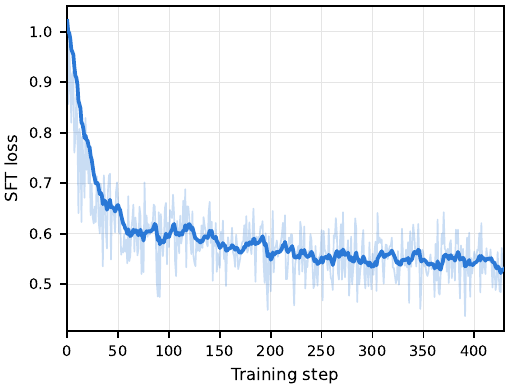}
\end{minipage}
\caption{\textbf{Left:} SFT hyperparameters. \textbf{Right:} SFT training loss.}
\label{fig:sft-config-loss}
\end{figure}

\subsection{RL Hyperparameters}
\label{app:training-details}

Table~\ref{tab:hyperparameters} lists the hyperparameters of the
accuracy-margin run summarized by Algorithm~\ref{alg:accuracy-margin}. A skeleton code base is enclosed in the supplementary.

\begin{table}[h]
\centering\small
\caption{Training Hyperparameters.}
\label{tab:hyperparameters}
\begin{tabular}{ll}
\toprule
 & Value \\
\midrule
\multicolumn{2}{l}{\emph{Run / data}} \\
Rounds $R$ & 3 \\
Images per round $B$ & 1{,}280 (cyclic disjoint slices; pool 44{,}643, seed 42) \\
Image long-edge cap & 1{,}536 pixels \\
\midrule
\multicolumn{2}{l}{\emph{Challenger (GRPO)}} \\
Steps per round $m$ & 10 \\
Rollout batch size & 32 \\
Group size $G$ & 4 \\
Learning rate & $1\times10^{-6}$ \\
Rollout temperature & 1.0 \\
Max response length & 4{,}096 tokens \\
Code-turn budget & 8 turns \\
Sandbox timeout & 30 s \\
Reward samples $k_r$ & 1 (temperature 1.0, both harnesses) \\
Parse-failure reward & $-1.0$ \\
\midrule
\multicolumn{2}{l}{\emph{Data filter}} \\
Filter solves $k_f$ & 4 (temperature 1.0) \\
% Pass-rate band $[\tau_{\mathrm{lo}}, \tau_{\mathrm{hi}}]$ & $[0.25, 0.75]$ \\
Candidates per image & 4 \\
Kept per round & 1{,}280 (cap) \\
\midrule
\multicolumn{2}{l}{\emph{Solver (GRPO)}} \\
Steps per round $n$ & 20 (one pass; no epochs) \\
Rollout batch size & 64 \\
Group size $G$ & 8 \\
Learning rate & $1\times10^{-6}$ \\
Rollout temperature & 1.0 \\
Max response length & 8{,}192 tokens \\
Max images per episode & 20 \\
Tool-use bonus & $+0.2$ \\
Format penalty & $-0.5$ \\
\midrule
\multicolumn{2}{l}{\emph{Evaluation}} \\
Sampling & $t{=}0.6$, top-$p{=}1.0$, top-$k{=}20$, min-$p{=}0$ \\
Max tokens & 16{,}000 \\
Max images per episode & 20 \\
\bottomrule
\end{tabular}
\end{table}

\subsection{Evaluation Benchmarks}
\label{sec:appendix-benchmarks}

We evaluate the solver on seven benchmarks spanning three families:
agentic thinking-with-images, high-resolution visual search, and
multimodal mathematical reasoning. Unless stated otherwise, we report
accuracy with greedy decoding, and we run the solver in its full agentic
configuration (i.e., with access to the image zoom-in tool).

\paragraph{TIRBench.}
TIRBench~\citep{li2025tir} is a benchmark for agentic thinking-with-images
reasoning. It comprises 13 tasks and 1{,}215 samples, each designed to be
unsolvable by static observation alone and instead requiring multi-step,
tool-based image manipulation---e.g., zooming, rotating, contrast
enhancement, drawing auxiliary lines, or segmenting and reassembling
pieces. Because TIRBench directly measures tool-orchestration capability,
it is the primary in-domain benchmark for our method.

\paragraph{HRBench.}
HRBench~\citep{hrbench} targets fine-grained visual perception and spatial
reasoning over ultra-high-resolution images (the HRBench-4K subset), where
the decisive visual evidence occupies only a small region of a very large
image. It stresses precisely the regime in which a zoom-in tool should
help most, and in which naive resizing destroys the information needed to
answer.

\paragraph{V$^*$ Bench.}
V$^*$Bench~\citep{DBLP:conf/cvpr/WuX24a} evaluates visual search over high-resolution,
visually crowded images (191 samples, average resolution
$2246 \times 1582$). It contains two subtasks: attribute recognition
(115 samples), which asks for a specific attribute (e.g., color,
material) of an object in the scene, and spatial relationship reasoning
(76 samples), which asks for the relative position of two objects. Both
require the model to localize small, task-relevant regions before
answering.

\paragraph{RealWorldQA.}
RealWorldQA~\citep{RealWorldQA2024} consists of over 700 real-world
photographs---largely captured from vehicles and other everyday
settings---each paired with a question and a short, easily verifiable
answer. It measures practical spatial understanding and grounding in
naturalistic scenes.

\paragraph{MathVision.}
MathVision~\citep{DBLP:conf/nips/WangPSLRZZL24} contains 3{,}040 competition-level
mathematics problems embedded in visual contexts (figures, diagrams,
charts), spanning 16 mathematical disciplines across 5 difficulty
levels. Problems are drawn from real high-school and university
competitions, emphasizing deep reasoning over diagrams rather than
perception alone.

\paragraph{MathVerse.}
MathVerse~\citep{DBLP:conf/eccv/ZhangJZLGQZLCQGL24} is an all-around visual math benchmark of
2{,}612 high-quality multi-subject problems with diagrams. Each problem
is human-transformed into six distinct versions that vary the degree of
information provided in text versus vision, yielding 15K test instances
in total. This design tests whether a model truly interprets the diagram
rather than shortcutting from the textual question; we report results
under the CoT evaluation protocol.

\paragraph{WeMath.}
WeMath~\citep{qiao2024we} is a large-scale benchmark of visual math
problems organized into 67 hierarchical knowledge concepts. Rather than
scoring end-to-end correctness only, it decomposes problems into
individual knowledge steps, enabling a fine-grained, process-oriented
diagnosis of mathematical reasoning in visual contexts.

\subsection{Training Data}
\label{sec:appendix-training-data}

We extract images from the RL training dataset from PyVision-RL~\citep{zhao2026pyvisionrl} and use these unlabeled images as the only inputs throughout \methodname{} experiments. Due to computation limitation, we filter out over-sized images, resulting a pool of 42,124 images spanning various sources and resolution scales (Fig.~\ref{fig:resolution}).

\begin{figure}[h]
    \centering
    \includegraphics[width=0.75\linewidth]{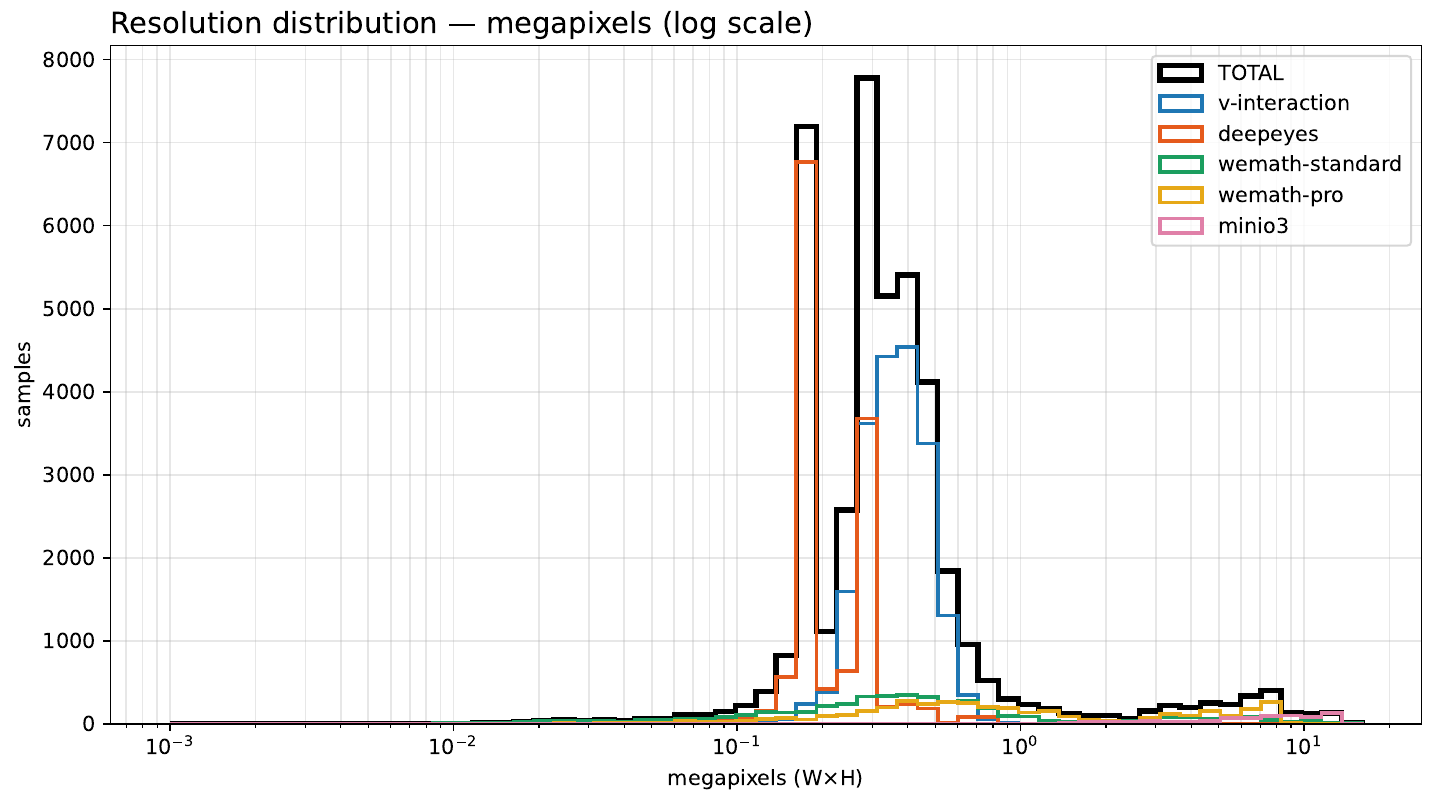}
    \caption{Resolution Distribution of Training Image Pool.}
    \label{fig:resolution}
\end{figure}

\section{Challenger Prompt}
\label{app:challenger-prompt}

Listing~\ref{lst:challenger-prompt} shows the complete prompt of Challenger with the PyVision harness~\cite{zhao2025pyvision}. Following the PyVision-RL wire format, the
episode is a \emph{single user message with no system prompt}: the image is
embedded in \texttt{<image\_clue\_0>} tags, and the \texttt{\{width\}/\{height\}} placeholders are filled with the image's pixel dimensions.
The template re-points the reference solver scaffold at question
\emph{authoring}: the finalize contract is a single fenced \texttt{json} block
(question, four options, gold letter) in place of the solver's boxed answer,
and the prompt requires the model to inspect at least one cropped region with
code before authoring.

\begin{tcolorbox}[enhanced, breakable, sharp corners,
                  colback=black!4, colframe=black!75, boxrule=0.4pt,
                  boxsep=0pt, left=6pt, right=6pt, top=5pt, bottom=5pt,
                  pad at break*=3mm]
\begin{lstlisting}[
  basicstyle=\ttfamily\footnotesize,
  columns=fullflexible, keepspaces=true, breaklines=true,
  captionpos=t, abovecaptionskip=6pt, belowcaptionskip=6pt,
  caption={The Challenger Prompt.},
  label={lst:challenger-prompt},
  literate={’}{'}1 {—}{{\textemdash}}1]
<image_clue_0><image></image_clue_0>

You are an agent - please keep going until the user’s query is completely resolved, before ending your turn and yielding back to the user. Only terminate your turn when you are sure that the problem is solved.

Your task: author **exactly one** challenging multiple-choice question (four options A-D) about the image. The question must require fine-grained visual examination of the image to answer correctly, must be answerable purely from the image (no outside knowledge), and the gold answer must be unambiguous.

You now have the ability to selectively write executable Python code to enhance your inspection of the image. The Python code will be executed by an external sandbox.

You MUST plan extensively before each function call, and reflect extensively on the outcomes of the previous function calls. DO NOT do this entire process by making function calls only, as this can impair your ability to explore the image and think insightfully.

For all the provided images, in order, the i-th image has already been read into the global variable `image_clue_i` using the PIL.Image.open() function. When writing Python code, you can directly use these variables without needing to read them again.

You MUST use the python tool (e.g., matplotlib library) to inspect the image before authoring the question — zoom into at least one region that looks interesting (small text, faces, objects in clutter, fine textures, partially occluded items, etc.) and base the question on something you observed in the close-up views. This includes but is not limited to zooming in, rotating, adjusting contrast, computing statistics, or isolating features.

Note that when you use matplotlib to visualize data or further process images, you need to use plt.show() to display these images; there is no need to save them. Do not use image processing libraries like cv2 or PIL. If you want to check the value of a variable, you MUST use print() to check it.

The output (wrapped in `<interpreter>output_str</interpreter>`) can be returned to aid your inspection. The Python code should be complete scripts, including necessary imports.
Each code snippet is wrapped with `<code>
```python
code snippet
```
</code>`.

*image resolution:*

Image Width: {width}; Image Height: {height}

*your task:*
Explore the image with code first; do NOT propose the final MCQ until you have inspected at least one cropped region with the tool. When — and only when — you are ready to finalize, stop writing code and output a single fenced ```json ... ``` block with keys "question" (question text only, no options), "options" (A-D), "answer" (one letter):

```json
{
  "question": "<question text only, no options>",
  "options": {"A": "...", "B": "...", "C": "...", "D": "..."},
  "answer": "<one of A, B, C, D>"
}
```

The JSON block is the signal that you are done; do not emit it earlier.
\end{lstlisting}
\end{tcolorbox}
\section{Analysis}

\subsection{Per-Round Data Distribution}

We sample 100 images from the question pool used in each training round and 100 unique images from each evaluation benchmark, deduplicating benchmarks that attach multiple questions to one image. Every image is encoded independently by the frozen Qwen3-VL-4B vision tower, and we take the mean of its merged patch embeddings followed by L2 normalization as the image representation. All 1000 embeddings are then projected jointly: PCA first reduces the 2560-dimensional vectors to 50 dimensions, and t-SNE (perplexity 30, PCA initialization) maps them to two dimensions. Because a single shared projection is fit on all points, coordinates are directly comparable across the three per-round panels, which differ only in which round's generated images are highlighted.

\begin{figure}[h]
\centering
\includegraphics[width=1\linewidth]{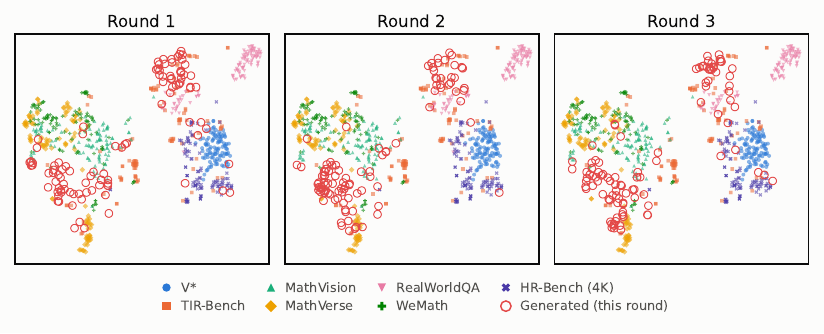}
\caption{\textbf{Image-embedding space of generated questions vs.\ evaluation benchmarks across training rounds.} Each panel shows a shared 2D projection (PCA + t-SNE over Qwen3-VL-4B vision-tower embeddings) of 100 images sampled from each of seven evaluation benchmarks (colored points) and 100 images underlying the questions generated in the indicated training round (red rings). 
% Generated images concentrate in the math-diagram region (MathVerse, WeMath, MathVision) and near TIR-Bench, while the natural-image benchmarks (V*, HR-Bench, RealWorldQA) form distinct clusters that the generated distribution largely does not cover, indicating that evaluation gains on these benchmarks arise from transfer rather than distributional overlap. The generated distribution remains broadly stable across rounds, with a mild shift toward diagram-style content in rounds 2–3.
}
\label{fig:per-round-dist}
\end{figure}

In each round, sampled images concentrate in the math-diagram region and near TIR-Bench, while the natural-image benchmarks form distinct clusters that the generated distribution largely does not cover, indicating that evaluation gains on these benchmarks arise from transfer rather than distributional overlap. The generated distribution remains broadly stable across rounds, with a mild shift toward diagram-style content in rounds 2–3. 
As these images constitute the sole input information during \methodname{}, their distribution naturally correlates with the per-round improvement. For example, the RealworldQA cluster is isolated from the others and is not covered by the training images, resulting in the smallest performance gains among all benchmarks.

\subsection{Iteration Scaling}

We continue to scale the evolution rounds of \methodname{} to 8 rounds. As reported in Tab.\ref{tab:iteration-scaling-bounds} instead of consistent improvement, \methodname{} evolution suffers with a degradation trend across evaluated tasks, similar to the scaling bound observed in R-Zero~\citep{huang2026r}. 

Concretely, most benchmarks reach their peak at round 5 or 6 and deteriorate thereafter.
MathVision-Mini drops steadily from 27.3 at round 5 to 23.4 at round 7, recovering only marginally to 23.7 at round 8.
V* peaks at 78.5 in round 6 before falling sharply to 72.3 in round 7, and V* and MathVision-Mini both end round 8 well below their round-5 scores. Text-centric benchmarks degrade more mildly: TIR-Bench declines from 21.7 to 21.0, while WeMath oscillates in the 62--64 range and never exceeds its round-5 score by a meaningful margin.

\begin{table}[h]
    \centering
    \begin{tabular}[t]{lllll}
    \toprule & Round 5 & Round 6 & Round 7 & Round 8 \\
    \midrule
    WeMath & 63.2 & 62.1 & 64.2 & 64.3 \\
    MathVision-Mini & 27.3 & 24.0 & 23.4 & 23.7 \\
    V* & 77.5 & 78.5 & 72.3 & 74.4 \\
    TIR-Bench & 21.7 & 20.7 & 21.1 & 21.0 \\
    \bottomrule
    \end{tabular}
    \caption{\textbf{Performance of \methodname{} across evolution rounds 5--8 on four benchmarks.} Scores are reported in percentages; the best result for each benchmark is achieved no later than round 6, after which performance degrades.}
    \label{tab:iteration-scaling-bounds}
\end{table}

We attribute the instability and degradation under iteration scaling to the quality degradation of the generated solver training data.
To verify this hypothesis, we use Qwen3.8-27B~\citep{qwen38} to probe the accuracy of the proposed questions $\{(I_i, x_i, y_i)\}_{i=1}^N$ at each round.
Note that even though the proposed labels are guaranteed by the consensus between the challenger and the solver, Table~\ref{tab:data_probe} shows that the proposed questions are in fact getting worse: judge accuracy starts at 68.5\% in round 1, remains stable around 62--67\% through round 4, and then drops sharply to 58.7\% at round 5 and below 47\% from round 6 onward, ending at 43.6\% in round 8.
This decline closely mirrors the benchmark trend in Table~\ref{tab:iteration-scaling-bounds}, where performance peaks around rounds 5--6 and deteriorates thereafter, suggesting that the scaling bound is driven by the decay of training signal quality rather than by optimization limits.
R-Zero relies on an additional round of majority voting from solver answers to generate the answer $y$, but as both models drift away from the true data distribution over iterations, the consensus between two degraded models is no longer a faithful filter, and it suffers from the same degradation.

\begin{table}[h]
    \centering
    \begin{tabular}{lcccccccc}
    \toprule
    Round & 1 & 2 & 3 & 4 & 5 & 6 & 7 & 8 \\
    \midrule
    $n$ & 1201 & 1280 & 1281 & 1280 & 1281 & 1280 & 1280 & 1280 \\
    Judge Acc. (\%) & 68.5 & 62.2 & 66.1 & 67.1 & 58.7 & 46.1 & 46.2 & 43.6 \\
    \bottomrule
    \end{tabular}
    \caption{Accuracy of the proposed solver training questions $\{(I_i, x_i, y_i)\}_{i=1}^N$ at each evolution round, probed by Qwen3.8-27B. Despite consensus-based filtering, question quality degrades from 68.5\% to 43.6\%, mirroring the performance degradation in Table~\ref{tab:iteration-scaling-bounds}.}
    \label{tab:data_probe}
\end{table}
\section{Case Study}

\subsection{Challenger Example Trace}

We present a full real trajectory in Fig.~\ref{fig:example-trace-challenger} about how a challenger equipped with PyVision harness proposes a visual question pair from an unlabeled image.

\begin{figure}[p]
    \centering
    \includegraphics[width=\linewidth]{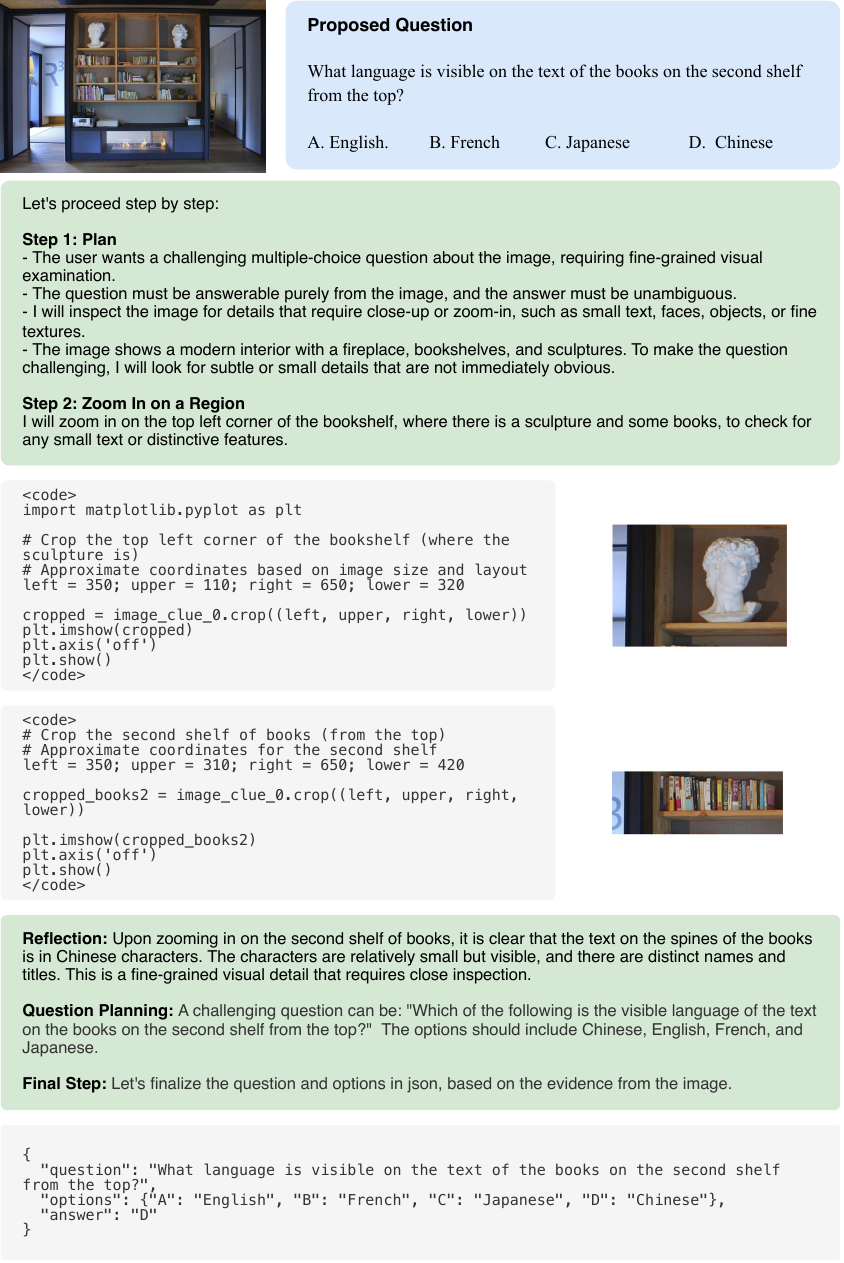}
    \caption{Example Trajectory of Challenger Agent.}
    \label{fig:example-trace-challenger}
\end{figure}

\subsection{Example Proposed Questions}

We present some of proposed questions on the same set of images in Tab.~\ref{tab:question-cases}. As \methodname{} proceeds, the complexity of proposed question increases.

\begin{table}[h]
\centering
\caption{Proposed Questions.}
\includegraphics[width=1\linewidth]{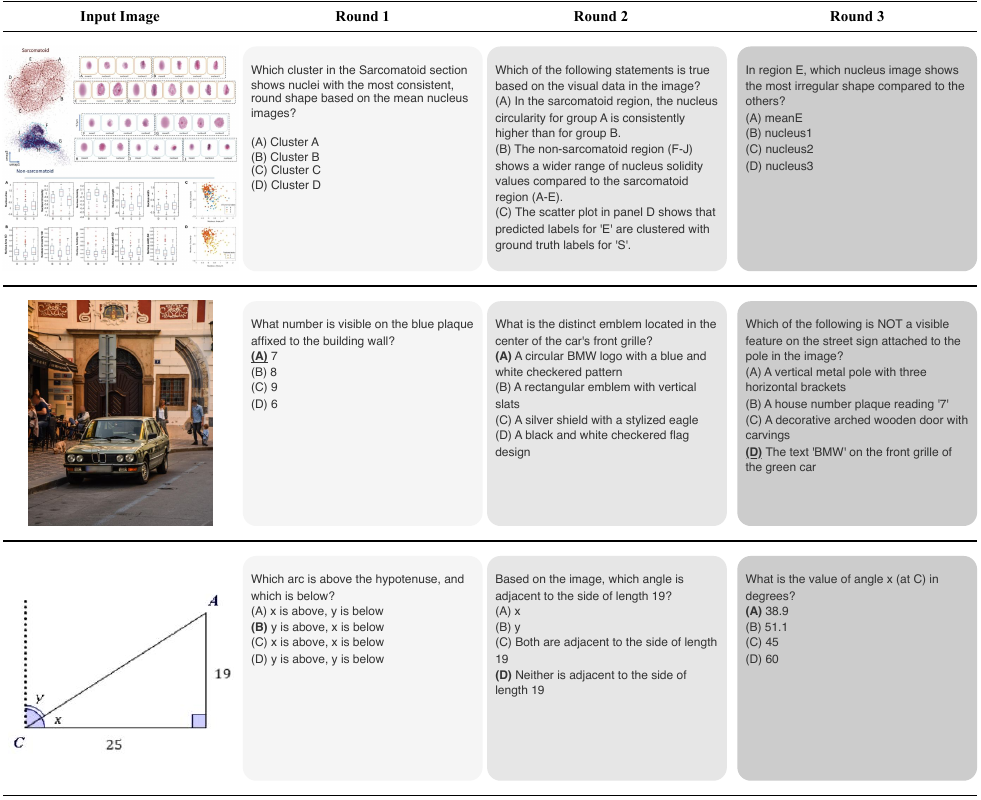}
\label{tab:question-cases}
\end{table}

\end{document}